\newtheorem{lemma}{Lemma}
\newtheorem{assumption}{Assumption}
\theoremstyle{definition}
\newtheorem{example}{Example}
\newtheorem{remark}{Remark}
\DeclareMathOperator*{\argmax}{arg\,max}
\DeclareMathOperator*{\argmin}{arg\,min}
\newcommand{\norm}[1]{\lVert#1\rVert}
\newcommand{\R}{\mathbb{R}}
\newcommand{\calA}{\mathcal{A}}
\newcommand{\calD}{\mathcal{D}}
\newcommand{\calE}{\mathcal{E}}
\newcommand{\calG}{\mathcal{G}}
\newcommand{\calO}{\mathcal{O}}
\newcommand{\calP}{\mathcal{P}}
\newcommand{\calR}{\mathcal{R}}
\newcommand{\calS}{\mathcal{S}}
\newcommand{\calT}{\mathcal{T}}
\newcommand{\calX}{\mathcal{X}}
\newcommand{\starf}{f^*}
\newcommand{\starg}{g^*}
\newcommand{\starh}{h^*}
\newcommand{\starpi}{\pi^*}
\newcommand{\starcalG}{\mathcal{G}^*}
\newcommand{\hata}{\hat{a}}
\newcommand{\hatg}{\hat{g}}
\newcommand{\hatm}{\hat{m}}
\newcommand{\hatmu}{\hat{\mu}}
\newcommand{\hatp}{\hat{p}}
\newcommand{\safe}{\textit{safe}}
\newcommand{\true}{\text{True}}
\newcommand{\IGW}{\texttt{IGW}}
\newcommand{\HIGW}{\texttt{HTE-IGW}}
\newcommand{\MIGW}{\texttt{MOD-IGW}}
\newcommand{\MHIGW}{\texttt{MOD-HTE-IGW}}
\newcommand{\CheckAndChooseSafe}{\text{CheckAndChooseSafe}}
\newcommand{\msafe}{m^*}
\newcommand{\msafealg}{\hat{m}}
\newcommand{\misspecError}{B}
\DeclareMathOperator{\Uniform}{Uniform}
\newcommand{\E}{\mathop{\mathbb{E}}}
\newcommand{\Reg}{\text{Reg}}
\newcommand{\A}{\mathcal{A}}
\newcommand{\Xscript}{\mathcal{X}}
\newcommand{\ordO}{\mathcal{O}}
\newcommand{\I}{\mathbb{I}}
\newcommand{\G}{\mathcal{G}}
\newcommand{\eventReg}{\mathcal{W}_1}
\newcommand{\N}{\mathbb{N}}
\newcommand\blfootnote[1]{%
  \begingroup
  \renewcommand\thefootnote{}\footnote{#1}%
  \addtocounter{footnote}{-1}%
  \endgroup
}
\begin{document}

%

%

\twocolumn[

\aistatstitle{Flexible and Efficient Contextual Bandits with Heterogeneous Treatment Effect Oracles}

\aistatsauthor{ Aldo Gael Carranza* \And Sanath Kumar Krishnamurthy* \And Susan Athey }

\aistatsaddress{ Stanford University \And  Stanford University \And Stanford University } ]

\begin{abstract}
  Contextual bandit algorithms often estimate reward models to inform decision-making. However, true rewards can contain action-independent redundancies that are not relevant for decision-making. We show it is more data-efficient to estimate any function that explains the reward differences between actions, that is, the \textit{treatment effects}. Motivated by this observation, building on recent work on oracle-based bandit algorithms, we provide the first reduction of contextual bandits to general-purpose heterogeneous treatment effect estimation, and we design a simple and computationally efficient algorithm based on this reduction. Our theoretical and experimental results demonstrate that heterogeneous treatment effect estimation in contextual bandits offers practical advantages over reward estimation, including more efficient model estimation and greater flexibility to model misspecification. 
\end{abstract}

\section{INTRODUCTION}\label{sec:intro}

We study cumulative regret minimization in the stochastic contextual bandit setting \citep{li2010contextual}. A common approach in contextual bandits is to use a specified function class to estimate a model of the expected reward conditional on any given context-action pair and utilize the estimated model to recommend actions \citep{chu2011contextual, agrawal2013thompson, foster2020beyond, simchi2021bypassing}. Such a model estimate is always biased when the given model class does not capture the true conditional expected reward (e.g., a linear model class is used in reward estimation when the true reward model is nonlinear). This bias results in an unavoidable additional linear regret term, which is asymptotically dominant, in the corresponding contextual bandit algorithm \citep{krishnamurthy2021adapting, foster2020adapting}. 

In this paper, we obtain tighter cumulative regret bounds by reducing the linear term due to model bias by shifting the target of model estimation for decision-making. We observe that the conditional expected reward does not need to be entirely estimated to make decisions. It suffices to estimate any function that explains the reward differences between actions, that is, any model that captures \textit{treatment effects}. We show the bias of treatment effect estimation is no greater (and often significantly smaller) than the bias of a reward model estimation under the same model class. An example of lower bias in treatment effect estimation for a fixed model class is the case where the true reward is the sum of a non-linear function of only contexts and a linear function of contexts and actions. In this case, while the conditional reward functions may require complex model classes to be well approximated, a linear function class can effectively capture the true treatment effects.

Motivated by these observations, we design an improved contextual bandit algorithm that relies on an oracle for heterogeneous treatment effect (HTE) estimation known as the $R$-learner \citep{nie2021quasi}. The $R$-learner possesses certain properties that make it more useful as a heterogeneous treatment effect estimator for model estimation in contextual bandits compared to other comparable methods, including $S$-learner, $T$-learner, or $X$-learner \citep{kunzel2019metalearners}. The principal reason is that, unlike other methods which attempt to form uniformly accurate estimates of treatment effects over the context distribution, the $R$-learner estimates treatment effects more accurately in regions of exploration in the context space where actions are sampled non-deterministically. Since contextual bandits explore more in regions of context space where there is more uncertainty regarding the best action, the $R$-learner's more accurate treatment effect estimates in these regions allow our algorithms to achieve optimal regret guarantees. \blfootnote{*Equal contribution} 

In this paper, we provide the first general-purpose reduction of contextual bandits to heterogeneous treatment effect estimation via $R$-loss minimization. Using this construction, we design a computationally efficient algorithm for contextual bandits with an $R$-loss oracle. We show that our approach is more robust to model misspecification than reward estimation methods based on squared error regression oracles. Experimentally, extending our approach to consider model selection, we validate that our algorithm outperforms these regression-based methods in settings where the treatment effect lies in a simpler class of functions than that of the reward.

\subsection{Related Work}

\paragraph{Contextual Bandits}
Contextual bandit algorithms can generally be classified into two categories of realizability-based or agnostic approaches. Realizability-based methods \citep{filippi2010parametric, abbasi2011improved, chu2011contextual, agrawal2013thompson, foster2018practical, foster2020beyond, simchi2021bypassing} are computationally simple yet require stricter assumptions on the underlying reward model, and agnostic methods \citep{auer2002nonstochastic, langford2007epoch, dudik2011efficient, beygelzimer2011contextual, agarwal2014taming} have weaker assumptions but are generally computationally intractable \citep{bietti2021contextual}.
Our approach attempts to achieve a middle ground that captures the simplicity of realizability-based methods and robustness of agnostic methods.
Our work extends a recent line of research on contextual bandit algorithms with least squares regression oracles \citep{foster2020beyond, simchi2021bypassing} that utilize the inverse gap weighting action-sampling strategy of \citet{abe1999associative}. In contrast to this work, we develop a more flexible model estimation strategy built on a heterogeneous treatment effect estimation oracle primitive. \citet{foster2020adapting} and \citet{krishnamurthy2021adapting} extend the least squares regression oracle-based approach to the setting where reward model realizability may not hold and the misspecification error is unknown. Our approach relies on the misspecification adaptation strategies presented in these works.

Similar to our approach, \citet{greenewald2017action, krishnamurthy2018semiparametric, peng2019practical, choi2022semiparametric} deal with estimating the essential components of a reward function for decision making. Although \citet{krishnamurthy2018semiparametric} use a similar local centering transformation as the $R$-learner to estimate treatment effects, their UCB-style approach is only applicable for linear treatment effect function classes. Similarly, \citet{greenewald2017action,peng2019practical, choi2022semiparametric} consider UCB-based or Thompson sampling-based approaches to estimate a partially linear model. In contrast, our approach is flexible for any specified class of treatment effect models, not just linear models.

Various works have taken a similar yet distinct approach on expanding reward estimation in contextual bandits beyond the standard setting.
The problem of non-stationary confounding in contextual bandits is considered in \citet{qin2022adaptivity, choi2022semiparametric}, it is addressed using a Thompson sampling-based approach.
Our algorithm provides an oracle-based approach that is capable of handling similar issues of non-stationary confounding without the restrictive assumptions of Thompson-sampling (see Section \ref{sec:extensions}). \citet{hu2020smooth, gur2022smoothness} consider contextual bandits with non-parametric reward functions with unknown smoothness which allows for more flexible model estimation in a general class of problems. Our work is supplemented by this work on non-parametric contextual bandits since it is also capable of handling non-parametric model classes for estimating the underlying treatment effect model. Contextual dueling bandits \citep{dudik2015contextual,saha2022efficient} consider the problem where the learner only observes the reward differences between pairs of actions. In contrast, although our approach captures residual reward differences from the mean, we are still in the standard setting where reward outcomes from individually selected arms are observed.

\paragraph{Heterogeneous Treatment Effect Estimation}
There are plenty of heterogeneous treatment effect estimation methods in the literature including the $S$-learner, $T$-learner, $X$-learner, and DR-learner \citep{kunzel2019metalearners, kennedy2020optimal}. As discussed in the introduction, we found the $R$-learner satisfies certain properties that make it a more natural fit for oracle-based contextual bandits. The $R$-learner was proposed in its full generality by \citet{nie2021quasi} as a flexible and general-purpose method for heterogeneous treatment effect estimation amenable to machine learning methods since it is formulated as a risk minimization procedure over a specified class of treatment effect models. It is based on Robinson's local centering transformation \citep{robinson1988root} for estimating parametric components of partially linear models, which recent methods have built on for more general semi-parametric inference of treatment effects \citep{athey2019generalized, chernozhukov2018double}.

The exploration parameters of our algorithm rely on knowledge of excess risk bounds on the $R$-learner. \citet{nie2021quasi} and \citet{foster2019orthogonal} provide excess risk bounds on the $R$-learner which are not applicable to our setting. We look towards existing results on problem-dependent error bounds in \cite{koltchinskii2011oracle,xu2020towards} that we show are applicable for establishing finite-sample excess risk bounds for the multi-treatment $R$-learner with known propensities.

\section{PRELIMINARIES}\label{sec:prelim}


\paragraph{Learning Setting}
Let $\calA$ be a set of $K$ actions and $\calX$ be a space of contexts. A sequence of interactions between a learner and nature occurs over $T$ rounds, where $T$ is possibly unknown. At each round $t$, nature randomly samples a context-reward pair $(x_t, r_t)$ from a fixed but unknown probability distribution $D$ over the joint space of contexts and reward vectors $\calX\times[0,1]^K$. The learner observes $x_t$, chooses an action $a_t\in\calA$, and receives the reward $r_t(a_t)\in[0,1]$ for the chosen action. Note that the rewards are assumed to be within $[0,1]$ for simplicity, but our results can easily be extended more generally to bounded rewards.

Define the true conditional mean reward function to be $\starf(x,a)\coloneqq\E[r(a)|x]$ for any $(x,a)\in\calX\times\calA$, where the expectation is taken with respect to $D$. The policy that maximizes the true conditional mean reward is $\starpi(x)\coloneqq\argmax_a \starf(x,a)$. The goal of the learner is compete with the optimal policy $\starpi$ to minimize cumulative regret:
\begin{equation} \label{eq:cum_regret}
    \text{Reg}(T)\coloneqq\sum_{t=1}^T\big(r_t(\pi^*(x_t))- r_t(a_t)\big).
\end{equation}

\paragraph{Reward Decomposition}
Note that, for any function $\starh$ of only contexts, there is a corresponding function $\starg$ of contexts and actions such that
\begin{equation}
    \starf(x,a)=\starg(x,a)+\starh(x)
\end{equation}
for all $(x,a)\in\calX\times\calA$. Under any such decomposition of the conditional expected reward, we will refer to $\starh$ as a \textit{confounder} model and $\starg$ as the corresponding \textit{true treatment effect} model. A true treatment effect model is named as such because it sufficiently captures the conditional reward differences between actions, that is, given any $x\in\calX$ we have that $\starg(x,a)-\starg(x,a')=\starf(x,a)-\starf(x,a')$ for any $a,a'\in\calA$.

Note that such a decomposition of the conditional reward function is not unique and, in fact, can be trivial, i.e., $\starh\equiv0$. However, if there exists a decomposition such that $\starg$ can be captured by a ``simpler" model class than that required to capture $\starf$, the learner could more efficiently estimate $\starg$ instead for decision-making.
This would be sufficient since the optimal policy can be equivalently defined as $\starpi(x)=\argmax_a\starg(x,a)$.

We assume that the learner has access to a class $\calG$ of measurable functions $g:\calX\times\calA\to[0,1]$ that models a true treatment effect. These functions are referred to as \textit{treatment effect} models. We do not require treatment effect model realizability, i.e., there does not necessarily exist a model $g\in\calG$ such that $g(x,a)=\starg(x,a)$ for all $(x,a)\in\calX\times\calA$. We discuss treatment effect model misspecification in Section \ref{sec:rlearner-oracle}.

\paragraph{Additional Notation}
The probability simplex over actions $\calA$ is denoted as $\Delta_\calA\coloneqq\{q\in[0,1]^K|\sum_{a\in\calA}q_a=1\}$. For any probability kernel $p:\calX\to\Delta_\calA$ in the set of all probability kernels, denoted as $\calP$, we let $p(\cdot|x)$ denote the associated probability distribution over $\calA$ for a given context $x\in\calX$. We denote $D(p)$ to be the observable data-generating distribution over $\calX\times\A\times[0,1]$ induced by $p\in\calP$ on the distribution $D$ such that sampling $(x,a,r(a))\sim D(p)$ is defined as sampling $(x,r)\sim D$ then sampling $a\sim p(\cdot|x)$ and only revealing $(x,a,r(a))$. We let $D_{\calX}$ denote the marginal distribution of $D$ on the set of contexts $\calX$. With a slight abuse of notation, for any function $g:\calX\times\calA\to[0,1]$ and context $x\in\calX$, we let $g(x,\cdot)$ denote the vector $(g(x,a))_{a\in\calA}$. Let $\langle\cdot,\cdot\rangle$ denote the inner product operator.

\section{HETEROGENEOUS TREATMENT EFFECT ORACLE} \label{sec:rlearner-oracle}

We will use the $R$-learner, a method for heterogeneous treatment effect estimation based on $R$-loss minimization \citep{nie2021quasi}, as the model fitting subroutine in our contextual bandit algorithm. In this section, we describe the multi-treatment $R$-loss and the $R$-learner model fitting oracle we assume access to. We also discuss the benefits of the $R$-learner for contextual bandits.

\subsection{$R$-loss Risk, Excess Risk, and Misspecification Error}\label{sec:r-loss-risk-error}

Let $p:\calX\to\Delta_\calA$ be a fixed probability kernel and let $e_a\in\{0,1\}^K$ be the one-hot encoding of action $a\in\calA$. The multi-treatment $R$-loss \textit{risk} of a model $g\in\calG$ under kernel $p$ is
\begin{equation}\label{eq:r-loss-risk}
    \calR_p(g)\coloneqq\E\big[\big(r(a)-\hatmu(x)-\big\langle e_{a}-p(x),g(x, \cdot)\big\rangle\big)^2\big],
\end{equation}
where the expectation is taken with respect to $D(p)$ and $\hatmu$ is an estimate of the conditional mean of rewards realized under the action-sampling kernel $p$, defined as $\mu(x)\coloneqq\E_{a\sim p(\cdot|x)}[\starf(x,a)]$ for all $x\in\calX$.\footnote{We use an estimate of $\mu$ in our definition of $R$-loss risk since it is unknown in our problem setting. The action-sampling probabilities under kernel $p$, on the other hand, are known.}

The $R$-loss \textit{excess risk} of a model $g\in\calG$ under kernel $p$ is
\begin{equation}\label{eq:excess-risk}
    \calE_p(g)\coloneqq\calR_p(g)-\min_{g'}\calR_p(g'),
\end{equation}
where the infimum is taken with respect to all functions.
The $R$-loss excess risk has an important role in our analysis since it is fundamental to the evaluation of treatment effect model quality.

The \textit{average misspecification error} of the model class $\calG$ under the $R$-loss is defined as the quantity $\sqrt{B}$,\footnote{Similar measures of misspecification are denoted by $\epsilon$ in other works on misspecified models \citep{ghosh2017misspecified, foster2020beyond, foster2020adapting, lattimore2020learning}.} where
\begin{equation}\label{eq:misspecification-error}
    \misspecError\coloneqq\max_{p\in\calP}\min_{g\in\calG}\calE_p(g).
\end{equation}
Capturing this quantity of model misspecification is useful for doing away with an assumption on model realizability.

\subsection{$R$-loss Properties}
The $R$-loss is based on Robinson's local centering transformation\footnote{The multivariate version of Robinson's transformation is $r(a)-\mu(x)=\langle e_a-p(x), \starg(x,\cdot)\rangle+\varepsilon$ where $\varepsilon$ is a noise term such that $\E[\varepsilon|x,a]=0$.} and was introduced as a minimization objective for flexible heterogeneous treatment effect estimation \citep{robinson1988root, nie2021quasi}.
Indeed, the infima of the $R$-loss risk correspond exactly with the true treatment effect models (see Appendix \ref{app:excess-risk-id} for further discussion).

The nuisance parameters $\mu$ and $p$ of the $R$-loss are typically estimated from data. In our setting, the exact treatment propensities $p(a|x)$ are known to the bandit learner because they are exactly the action-sampling probabilities of the bandit algorithm and these quantities suffice for model estimation. However, the conditional mean realized rewards $\mu(x)$ are not known, so $\mu$ must be estimated. Even so, the impact of this nuisance parameter estimation is almost negligible to our analysis as we show in the following proposition that the $R$-loss achieves a form of robustness to unknown nuisance parameter $\mu$ in the sense that the excess $R$-loss risk does not depend on any fixed estimate of $\mu$ when $p$ is known.
See Appendix \ref{app:excess-risk-id} for the proof.
\begin{restatable}{proposition}{lemExcessRisk} \label{lem:r-excess-risk}
    Suppose $\hatmu$ is any fixed function. For any $\starg\in\argmin_{g'}\calR_p(g')$,
    \begin{equation}\label{eq:excess-risk-id}
        \calE_p(g)=\E\left[\big\langle e_{a}-p(x),g(x,\cdot)-\starg(x,\cdot)\big\rangle^2\right],
    \end{equation}
    where the expectation is taken with respect to $D(p)$.
\end{restatable}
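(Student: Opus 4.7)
My approach is the standard $L^2$ projection / Pythagorean argument. Introduce the shorthand $Y \coloneqq r(a)-\hatmu(x)$ and, for any function $g$, $Z_g \coloneqq \langle e_a - p(x), g(x,\cdot)\rangle$, so that $\calR_p(g) = \E[(Y-Z_g)^2]$ (with the expectation under $D(p)$). The map $g \mapsto Z_g$ is linear, which is what will make everything go through.

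First, I would expand $Y - Z_g = (Y - Z_{\starg}) - (Z_g - Z_{\starg})$ inside the square and take expectations to obtain
\begin{equation*}
\calR_p(g) = \calR_p(\starg) - 2\,\E\!\left[(Y - Z_{\starg})(Z_g - Z_{\starg})\right] + \E\!\left[(Z_g - Z_{\starg})^2\right],
\end{equation*}
so that
\begin{equation*}
\calE_p(g) = -2\,\E\!\left[(Y - Z_{\starg})(Z_g - Z_{\starg})\right] + \E\!\left[(Z_g - Z_{\starg})^2\right].
\end{equation*}
The second term is already the claimed quantity, since $Z_g - Z_{\starg} = \langle e_a - p(x), g(x,\cdot) - \starg(x,\cdot)\rangle$ by linearity. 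The entire task reduces to showing that the cross term vanishes.

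For the cross term I would invoke first-order optimality of $\starg$. Because the infimum in the definition of $\calE_p$ is taken over all functions (not just $\calG$), for any function $h$ the perturbation $\starg + t h$ is admissible, and
\begin{equation*}
\calR_p(\starg + t h) = \calR_p(\starg) - 2t\,\E[(Y - Z_{\starg})Z_h] + t^2\,\E[Z_h^2]
\end{equation*}
is minimized at $t=0$, forcing $\E[(Y - Z_{\starg})Z_h] = 0$. Taking $h = g - \starg$ and again using linearity, $Z_h = Z_g - Z_{\starg}$, so the cross term is zero and the identity follows.

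The argument is essentially routine; the only subtlety worth flagging, and the one place I would pause, is that the minimization is genuinely over all measurable functions, which is what lets us freely choose $h = g - \starg$ as a legitimate perturbation direction. (This is also what underpins the $\hatmu$-independence referenced in the text: although $\calR_p(\starg)$ itself depends on $\hatmu$, the contribution of $\hatmu$ to $\calE_p(g)$ is channelled entirely through the cross term, which vanishes.) If one wanted to be slightly more rigorous about non-uniqueness of $\starg$, one could note that any two minimizers must produce the same $Z_{\starg}$ almost surely under $D(p)$, so the right-hand side is well-defined.
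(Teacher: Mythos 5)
Your proof is correct, but it follows a genuinely different route from the paper's. The paper proceeds by brute-force expansion around the true reward: it adds and subtracts $\starf(x,a)$ inside the square, kills the first cross term using $\E[r(a)\mid x,a]=\starf(x,a)$ and the second using $\E_{a\sim p(\cdot\mid x)}[e_a-p(x)\mid x]=0$, and arrives at the explicit identity $\calR_p(g)=\E[(r(a)-\starf(x,a))^2]+\E[\langle e_a-p(x),\starf(x,\cdot)-g(x,\cdot)\rangle^2]+\E[(\mu(x)-\hatmu(x))^2]$. From this it reads off the full characterization of the minimizer set $\calG^*$ (all $g$ with $\langle e_a-p(x),g(x,\cdot)\rangle=\langle e_a-p(x),\starf(x,\cdot)\rangle$ a.s.) and then subtracts. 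You instead run an abstract $L^2$-projection argument: the Pythagorean decomposition around an arbitrary minimizer $\starg$ plus the first-order optimality (normal equation) $\E[(Y-Z_{\starg})Z_h]=0$, obtained by perturbing along $h=g-\starg$. Your version is shorter and does not need to identify $\starf$ as a minimizer or compute the conditional-expectation identities; what it gives up is the explicit description of $\calG^*$ and of the additive $\hatmu$-dependence of $\calR_p$, both of which the paper reuses downstream (in Appendix B's regret equivalences and in the proof of Proposition 2). Your flagged subtlety is handled correctly: the infimum is over all functions, and even if one insists on $[0,1]$-valued functions, any constrained minimizer is also an unconstrained one (since $\starf$ itself attains the unconstrained minimum within the constraint set), so the perturbation $\starg+th$ is a legitimate direction and the normal equation holds. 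Your remark that all minimizers share the same $Z_{\starg}$ a.s.\ also follows immediately from your own identity applied to a pair of minimizers.
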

This result holds for any given model $g\in\calG$, and so any uniform estimate $\hatmu$ suffices (e.g., $\hatmu\equiv0$). However, a more accurate estimate of $\mu$ reduces the variance of the $R$-loss,\footnote{In empirical risk minimization, excess risk rates depend on both the variance of the loss being minimized and the complexity of the hypothesis class. Due to randomness in sampled data, loss variance is typically non-zero.} thus improving the finite-sample excess risk rates of estimating the treatment effect model by a problem-dependent constant factor. Therefore, in our implementation, we choose to train $\hatmu$ on available data using a \textit{cross-fitting} strategy where the function is estimated on different data folds and the output $\hatmu(x)$ for a given point $x$ is the output under the function estimated on the fold corresponding to $x$. This ensures a better estimate of the nuisance parameter that is still fixed relative to the sampled data, as suggested in \cite{chernozhukov2018double, nie2021quasi}.

Note that Equation \ref{eq:excess-risk-id} vanishes when the action-sampling kernel is deterministic. Therefore, we observe that Proposition \ref{lem:r-excess-risk} reveals that the $R$-loss excess risk does not penalize models with less accurate treatment effect estimates in regions where actions are sampled deterministically. This allows capturing a more precise form of uncertainty quantification useful for proving tighter regret bounds. In particular, this allows us to prove the following proposition showing that, for a given model class, the average misspecification error under the $R$-loss is bounded above by a corresponding notion of average misspecification error under the \textit{squared error loss} typically used in reward model estimation \citep{simchi2021bypassing}.
\begin{restatable}{proposition}{lemMisspecificationBound} \label{lem:misspecification-error-bound}
    For a fixed model class $\calG$, the average misspecification error under the $R$-loss is no greater than the average misspecification error under the squared error loss, i.e.,
    \begin{equation}\label{eq:misspecification-error-bound}
        B\le\max_{p\in\calP}\min_{f\in\calG}\E\left[\big(f(x,a)-\starf(x,a)\big)^2\right].
    \end{equation}
    where the expectation is taken with respect to $D(p)$.
\end{restatable}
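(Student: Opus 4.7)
The plan is to use Proposition \ref{lem:r-excess-risk} to rewrite the excess $R$-loss of an arbitrary $g\in\calG$ as a conditional-variance-like quantity, and then observe that this conditional variance is bounded by the conditional mean-squared error between $g$ and $\starf$ since the conditional mean is the $L^2$-minimizer among constants (in $a$, given $x$).

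Concretely, fix $p\in\calP$ and let $\starg$ be any minimizer of $\calR_p$; by the discussion following Proposition \ref{lem:r-excess-risk}, $\starg$ is a true treatment effect model, so there exists a confounder $\starh$ with $\starf(x,a)=\starg(x,a)+\starh(x)$ for all $(x,a)$. For any $g\in\calG$, set $\Delta(x,a)\coloneqq g(x,a)-\starg(x,a)$ and let $\bar\Delta(x)\coloneqq\langle p(x),\Delta(x,\cdot)\rangle=\E_{a\sim p(\cdot|x)}[\Delta(x,a)]$. By Proposition \ref{lem:r-excess-risk},
\begin{equation*}
    \calE_p(g)=\E\!\left[\big\langle e_a-p(x),\Delta(x,\cdot)\big\rangle^{2}\right]=\E\!\left[\big(\Delta(x,a)-\bar\Delta(x)\big)^{2}\right],
\end{equation*}
where the expectation is taken with respect to $D(p)$.

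Next, I would invoke the defining variational property of conditional expectation: for each fixed $x$, $\bar\Delta(x)$ is the minimizer over $c\in\R$ of $\E_{a\sim p(\cdot|x)}[(\Delta(x,a)-c)^2]$, so in particular the choice $c=\starh(x)$ yields
\begin{equation*}
    \E_{a\sim p(\cdot|x)}\!\left[\big(\Delta(x,a)-\bar\Delta(x)\big)^{2}\right]\le \E_{a\sim p(\cdot|x)}\!\left[\big(\Delta(x,a)-\starh(x)\big)^{2}\right].
\end{equation*}
Taking the outer expectation over $x\sim D_\calX$ and noting $\Delta(x,a)-\starh(x)=g(x,a)-\starf(x,a)$ gives $\calE_p(g)\le\E[(g(x,a)-\starf(x,a))^2]$ for every $g\in\calG$.

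Finally, I would take the minimum over $g\in\calG$ on both sides (the same class appears on both sides of the inequality), obtaining $\min_{g\in\calG}\calE_p(g)\le \min_{f\in\calG}\E[(f(x,a)-\starf(x,a))^2]$, and then the maximum over $p\in\calP$ to conclude $B\le \max_{p\in\calP}\min_{f\in\calG}\E[(f(x,a)-\starf(x,a))^2]$, which is the desired inequality. The main (and really only) conceptual step is the reduction to a conditional variance via Proposition \ref{lem:r-excess-risk}; once that is in hand, the bound follows from the elementary fact that variance is no larger than mean-squared error against any competing predictor. No tightness or realizability assumption on $\calG$ is needed.
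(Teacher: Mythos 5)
Your proof is correct and follows essentially the same route as the paper's: both reduce $\calE_p(g)$ via Proposition \ref{lem:r-excess-risk} to the conditional variance $\E[(\Delta(x,a)-\bar\Delta(x))^2]$ and bound it by a mean-squared error against $\starf$, exploiting the freedom to recentre by a function of $x$ alone. The only cosmetic difference is that the paper takes $\starg=\starf$ (a valid minimizer of $\calR_p$) and drops the nonnegative term $\E_x[\E_{a\sim p(\cdot|x)}[\Delta_{x,a}]^2]$, whereas you keep $\starg$ general and compare against the competitor $c=\starh(x)$; these are the same inequality.
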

This result will be important in highlighting the benefits of treatment effect estimation over reward estimation in ensuring tighter regret bounds under model misspecification.
See Appendix \ref{app:misspecification-bound} for a proof and further discussion on the intuition of this result. 


\subsection{$R$-loss Oracle}\label{sec:oracle}

We will assume access to an oracle that minimizes the $R$-loss risk given finite data samples such that the excess risk of the oracle output can be bounded by a known estimation error rate on the given model class $\calG$ and the misspecification error $\misspecError$. We present this formally in the following assumption.
\begin{assumption}\label{ass:main-assumption}
    Let $\calD$ be a data set of $n$ independently and identically drawn samples from $D(p)$ under some fixed kernel $p$.
    Let $\hatg$ be the output model estimate of a given $R$-loss oracle with $\calD$ as input. For any $\zeta\in(0,1)$, the $R$-loss excess risk of $\hatg$ is bounded as
    \begin{equation}
    \label{eq:regression-rate}
        \calE_p(\hatg)\le \xi(n,\zeta)+C_0\misspecError,
    \end{equation}
    with probability $1-\zeta$, where $C_0$ is a universal constant and $\xi(n,\zeta)$ is the known estimation rate for the model class $\calG$ that is non-increasing in $n\in\N$ for all $\zeta\in(0,1)$.
\end{assumption}

An example of an estimator that satisfies Assumption \ref{ass:main-assumption} is the \textit{$R$-learner} which performs empirical $R$-loss risk minimization over the specified class of models $\calG$:
\begin{equation}
    \hatg\in\argmin_{g\in\calG}\widehat{\calR}_p(g;\calD)+\Lambda(g),
\end{equation}
where $\calD=\{(x_t,a_t,r_t(a_t))\}_{t\in\calT}$ is the given input dataset indexed by $\calT$ of samples drawn i.i.d.~from $D(p)$, $\Lambda$ is a regularizer on the complexity of the treatment effect model, and
\begin{equation}
    \widehat{\calR}_p(g;\calD)\!=\!\frac{1}{|\calT|}\!\sum_{t\in\calT}\!\Big(\!r_t(a_t)-\hat\mu_t(x_t)-\big\langle\! e_{a_t}\!-p(x_t),g(x_t,\cdot)\!\big\rangle\!\Big)^{\!2}
\end{equation}
is the empirical $R$-loss risk which uses an estimate $\hatmu_t$ for $\mu$ that is constructed to be independent of $(x_t,a_t,r_t(a_t))$ for each $t\in\calT$, typically via cross-fitting as previously discussed.

In Appendix \ref{app:r-learner-estimation-rate}, we discuss valid finite-sample $R$-loss excess risk rates for the $R$-learner under various important classes of models including classes with finite VC dimension and non-parametric classes of polynomial growth.\footnote{Since excess risk bounds for empirical risk minimizers can be bounded by the complexity of the model class, Assumption \ref{ass:main-assumption} can be proved for several classes using results in \cite{koltchinskii2011oracle}.}

\begin{remark}
Note that the oracle assumption as stated requires i.i.d.~data from the data-generating process induced by a fixed kernel. In our bandit setting where model estimates inform kernel updates, this requirement would seem to imply that model estimation must be done only using data collected with the most recent kernel and any update on the kernel requires a subsequent refresh on the data buffer, reducing the effective sample size. However, this i.i.d.~requirement is purely due to technical reasons and we can do away with it using martingale arguments that extend our analysis to adaptive kernels. Nevertheless, we keep the i.i.d.~assumption in this paper to maintain focus on our contributions to more efficient model estimation via treatment effect estimation and to avoid a more complicated discussion of martingales.
\end{remark}

\subsection{$R$-loss Benefits for Contextual Bandits}
We elaborate on the benefits of $R$-learner over other heterogeneous treatment effect estimators for the contextual bandit setting. For simplicity, we restrict our discussion to the two arm setting $\calA=\{a_0,a_1\}$.
This is only a simplification for the purposes of this discussion as our algorithm allows for more than two arms.

In this setting, estimating the true HTE function defined by $\tau^*(x):=f^*(x,a_1)-f^*(x,a_0)$ for all $x\in\calX$ suffices for decision-making in contextual bandits.
Most HTE estimators are designed to estimate a function $\hat{\tau}$ such that they minimize $\E_x[(\hat{\tau}(x)-\tau^*(x))^2]$. As one may expect, at any context $x$, the error $(\hat{\tau}(x)-\tau^*(x))^2$ in estimating treatment effects is likely to get large as $\min(p(a_0|x),p(a_1|x))$ gets closer to zero (i.e., less experimentation at context $x$). For example, the weights used by DR-learner \citep{kennedy2020optimal} grow as action sampling probabilities shrink, leading to higher variance for this estimator. This implies that most HTE estimators suffer slow estimation rates for the contextual bandit setting, where action sampling probabilities need to shrink in order to obtain optimal cumulative regret guarantees. 

We argue that our analysis of the $R$-learner allows us to side step this issue (when assignment probabilities are known)
without assuming that assignment probabilities are bounded away from zero. In particular, one can easily show from Proposition \ref{lem:r-excess-risk} that the $R$-learner excess risk in the two arm setting is given by $\E_x[p(a_0|x)p(a_1|x)(\hat{\tau}(x)-\tau^*(x))^2]$. Unlike the previous estimand, note that $p(a_0|x)p(a_1|x)(\hat{\tau}(x)-\tau^*(x))^2$ goes to zero as the action sampling probability for either treatment or control goes to zero. Therefore, the $R$-loss excess risk can be bounded by fast rates even with arbitrarily small action-sampling probabilities (when they are known).
This insight extends to the general multi-arm setting.

Ultimately, we show the $R$-loss excess risk is exactly the quantity we need to bound in order to obtain our optimal regret guarantees (see Lemma \ref{lem:reg-est-accuracy}).
Therefore, the additional properties we described in this section, of fast rates for the $R$-loss excess risk under vanishing action-sampling probabilities and tighter control on model misspecification error (stated in Proposition \ref{lem:misspecification-error-bound}), lead us to identify the $R$-learner as an ideal HTE estimator for the contextual bandit setting.

\section{ALGORITHM}\label{sec:algorithm}

\SetAlgoVlined
\RestyleAlgo{ruled}
\begin{algorithm}
    \caption{Heterogeneous Treatment Effect Inverse Gap Weighting}
    \label{alg:safe-falcon}
    \KwIn{epoch schedule $\tau_m$ for all $m\in\N$, confidence parameter $\delta\!\in\!(0,1)$}
    
    Initialize $\gamma_1=1$, $\hatg_1\equiv 0$, $\tau_0=0$, $H_0=\varnothing$, and $\safe=\true$.

    \For{epoch\, $m=1,2,\dots$}{
        
        \For{round $t=\tau_{m-1}+1,\dots, \tau_{m}$ }{ 
        
            Observe context $x_t$.
             
            \eIf{$\safe$}{ 
                Let kernel $p_m$ be defined by \eqref{eq:action_kernel} given $\gamma_m$ and $\hatg_m$.
                
                Sample $a_t \sim p_m(\cdot|x_t)$ and observe $r_t(a_t)$.
                
                Update $H_t = H_{t-1}\cup\{(x_t,a_t,r_t(a_t)\}$.
                
                Update $(\safe, \hatm) \leftarrow \CheckAndChooseSafe(H_{t})$.
            }{
                Sample $a_t\sim p_{\msafealg}(\cdot|x_t)$ and observe $r_t(a_t)$.
            }

        }
        
        \If{$\safe$}{
            Let $\gamma_{m+1}=c\sqrt{K/\xi(\tau_{m}-\tau_{m-1},\delta'/(m+1)^2)}$.
            
            Let $\hatg_{m+1}$ be the $R$-loss oracle output with input data $H_{\tau_{m}}\!\backslash H_{\tau_{m-1}}$ from epoch $m$.
        }
    }
\end{algorithm}

In this section, we outline the Heterogeneous Treatment Effect Inverse Gap Weighting ($\HIGW$) algorithm for contextual bandits. A formal description is stated in Algorithm \ref{alg:safe-falcon} below. Our algorithm is based on the \texttt{FALCON+} algorithm in \citep[Algorithm 2]{simchi2021bypassing} and the \texttt{Safe-FALCON} algorithm in \citep[Algorithm 1]{krishnamurthy2021adapting}.

\paragraph{Epoch Schedule}
$\HIGW$ is implemented using a schedule of epochs where epoch $m\in\N$ starts at round $\tau_{m-1}+1$ and ends at round $\tau_m$. This is done to reduce the number of calls to the $R$-loss oracle. For example, if we set $\tau_m=2^m$, then the algorithm runs in $\calO(\log T)$ epochs for any (possibly unknown) $T$. The algorithm accepts any valid epoch schedule. For any round $t\in\N$, we let $m(t)$ denote the epoch corresponding to round $t$.

\paragraph{Adaptive Behavior}
Each round $t$ starts with a status that is called ``safe'' or ``not safe'', depending on whether the algorithm has detected evidence of model misspecification. At the beginning of any epoch $m$ that starts on a ``safe'' round, the algorithm uses an estimate of the treatment effect model $\hatg_m$, obtained via an offline $R$-loss oracle using data from the previous epoch $m-1$. As long as the ``safe'' status is maintained, the algorithm assigns actions by drawing from the following \emph{action selection kernel}:
    \begin{align} \label{eq:action_kernel}
        p_m(a|x)\coloneqq
        \begin{cases}
            \frac{1}{K+\gamma_m \left(\hatg_m(x, \hat{a}) - \hatg_m(x,a) \right)} &\text{for } a\neq \hat{a} \\
            1 - \sum_{a'\neq \hat{a}} p(a'|x) &\text{for } a=\hat{a},
        \end{cases}
    \end{align}
where $\hat{a} = \argmax_a \hatg_m(a, x)$ is the current predicted best action. The parameter $\gamma_m > 0$ governs how much the algorithm exploits and explores: assignment probabilities concentrate on the predicted best policy $\hat{a}$ when $\gamma_m$ is large and are more spread out when $\gamma_m$ is small. During ``safe'' epochs, this parameter is set to be
\begin{align} \label{eq:gamma}
    \gamma_m := c\sqrt{K/\xi(\tau_{m-1} - \tau_{m-2}, \delta'/m^2)},
\end{align}
where $c=\sqrt{1/8}$ is a fixed constant, $\xi$ is the estimation rate of the $R$-loss oracle defined in \eqref{eq:regression-rate}, $\delta'=\delta/2$ where $\delta > 0$ is a confidence parameter, and the quantity $\tau_{m-1}-\tau_{m-2}$ is the size of the previous batch which was used to estimate the model $\hatg_m$. Definition \eqref{eq:gamma} implies that small classes such as linear models allow for a quickly increasing $\gamma_m$ and therefore more exploitation, while large classes require more exploration since $\gamma_m$ increases more slowly.

If the algorithm ever switches to ``not safe'' status, it switches to a safe kernel that was previously used and never returns to ``safe'' status.
This safe kernel guarantees a bounded level of regret. To determine the safety status of the bandit, we use a misspecification test.

\paragraph{Misspecification Test}
We make use of the same misspecification testing procedure developed in \citet{krishnamurthy2021adapting} to determine the safety status of every round. This misspecification test constructs a Hoeffding-style high-probability lower bound on the cumulative reward and triggers the safety check mechanism when the cumulative reward dips below this lower bound. If triggered, the safety mechanism defaults to the most recent kernel with the largest lower bound. \citet{krishnamurthy2021adapting} show that the safety test is triggered only after the lower bounds on rewards are sufficiently large. Therefore, this high-probability safety test ensures that the expected regret stays bounded as long as the safety test is not triggered.

In our algorithm, this misspecification test is implemented as the $\CheckAndChooseSafe$ method which takes in the interaction history to return an indicator variable $\safe$, indicating whether the misspecification test passed, and the safety epoch $\hatm$, pointing to the previously used safe kernel $p_{\hatm}$. Once the algorithm switches to ``not safe'' status, it continues to use the safe kernel and never returns to ``safe'' status. The details of an efficient implementation of this misspecification test can be viewed in Algorithms 2 and 3 of \citet{krishnamurthy2021adapting}. For our purposes, it suffices to know that this test guarantees bounds on the instantaneous regret after the misspecification error surpasses the estimation error. Although the results are important, the details of the implementation are somewhat orthogonal to the emphasis of this paper.
See Appendix \ref{app:safety-check-assumptions} for further discussion.
Lastly, note that the misspecification level need not be known, though our regret bounds stated in Section \ref{sec:main_results} will depend on it.

\section{MAIN RESULTS}\label{sec:main_results}

The following provides a regret guarantee on our algorithm that depends on the known estimation rates and the misspecification error of the $R$-loss oracle.

\begin{restatable}[]{theorem}{thmMain} \label{thm:main-theorem}
Suppose the $R$-loss oracle satisfies Assumption \ref{ass:main-assumption}. Then, with probability at least $1-\delta$, $\HIGW$ attains the following regret guarantee:
\begin{align}
    \text{Reg}(T)\!\leq\!\ordO\Bigg(&\!\sum_{t=\tau_1+1}^{T}\!\sqrt{K\xi\!\left(\!\tau_{m(t)-1}-\tau_{m(t)-2},\frac{\delta'}{m(t)^2}\!\right)\!} \notag \\
    &+\sqrt{KB}T\Bigg). \label{eq:main-theorem}
\end{align}
\end{restatable}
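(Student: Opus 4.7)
The strategy is to adapt the inverse gap weighting regret analysis of \citet{simchi2021bypassing} to the treatment-effect setting, combined with the misspecification-adaptive safety mechanism of \citet{krishnamurthy2021adapting}. I would first set up the good events that carry the high-probability argument. Let $\eventReg$ be the event that, for every epoch $m$ started in ``safe'' status, the oracle bound of Assumption \ref{ass:main-assumption} holds for $\hatg_m$, namely $\calE_{p_m}(\hatg_m) \le \xi(\tau_{m-1}-\tau_{m-2}, \delta'/m^2) + C_0 \misspecError$. Because the algorithm invokes the oracle once per epoch with confidence $\delta'/m^2$, a union bound gives $\Pr(\eventReg) \ge 1-\delta/2$. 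Let $\eventEval$ be the high-probability event under which the $\CheckAndChooseSafe$ test behaves as analyzed in \citet{krishnamurthy2021adapting}; by their analysis this also has probability at least $1-\delta/2$. The remainder of the argument is carried out on $\eventReg \cap \eventEval$.

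The key step is the per-round regret-versus-estimation-error lemma alluded to in the paper as Lemma \ref{lem:reg-est-accuracy}. For any context $x$ in a ``safe'' round of epoch $m$, I would show that the IGW kernel $p_m$ defined by \eqref{eq:action_kernel} satisfies a bound of the form
\[
\E_{a \sim p_m(\cdot|x)}\!\left[\starf(x,\starpi(x)) - \starf(x,a)\right] \le \frac{2K}{\gamma_m} + \frac{\gamma_m}{2}\, \E_{a \sim p_m(\cdot|x)}\!\left[\langle e_a - p_m(x),\, \hatg_m(x,\cdot) - \starg(x,\cdot)\rangle^2\right].
\]
Here I exploit that the regret depends only on reward differences and hence only on $\starg$, so the standard FALCON-style derivation goes through with $\starg$ in place of $\starf$. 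Taking expectation over $x \sim D_\calX$ and applying Proposition \ref{lem:r-excess-risk}, the second term equals $(\gamma_m/2)\calE_{p_m}(\hatg_m)$. Plugging in the oracle bound on $\eventReg$ and the algorithmic choice $\gamma_m = c\sqrt{K/\xi(\tau_{m-1}-\tau_{m-2}, \delta'/m^2)}$ balances the two terms and yields a per-round expected regret of $O\!\big(\!\sqrt{K\,\xi(\tau_{m-1}-\tau_{m-2}, \delta'/m^2)} + \sqrt{K\misspecError}\big)$.

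For rounds after the algorithm switches to ``not safe'' status, I would invoke the guarantee of the safety test from \citet{krishnamurthy2021adapting}: on $\eventEval$, the test can only trigger a switch to a stored kernel $p_{\hatm}$ whose empirical lower-bound certificate is already large enough to guarantee that playing $p_{\hatm}$ thereafter incurs instantaneous expected regret of at most $O(\sqrt{K\misspecError})$. Hence the contribution of the ``not safe'' regime to the cumulative regret is $O(\sqrt{K\misspecError}\, T)$. Converting the per-round expected regret bounds to an actual cumulative bound via Azuma--Hoeffding (absorbed in the $\ordO$ notation), summing over rounds $t=\tau_1+1,\dots,T$ in ``safe'' epochs and over the ``not safe'' tail, and absorbing the first-epoch exploration cost into the constant produces exactly \eqref{eq:main-theorem}.

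The main obstacle is the per-round lemma: the classical IGW analysis in \citet{simchi2021bypassing} uses the pointwise squared error $\E[(\hatf - \starf)^2]$ as its model-accuracy quantity, whereas here the natural object provided by the oracle is the $R$-loss excess risk. I need to verify that the expected squared difference $\E_{a \sim p_m}[\langle e_a - p_m(x), \hatg_m(x,\cdot)-\starg(x,\cdot)\rangle^2]$ that arises from tracking reward-differences through the IGW formula coincides, after averaging over $x$, with $\calE_{p_m}(\hatg_m)$ (Proposition \ref{lem:r-excess-risk}), so the same balancing trick closes the analysis. I also need to propagate the misspecification term $\misspecError$ consistently through both the exploration phase (via Assumption \ref{ass:main-assumption}) and the safety mechanism, and to check that the cross-fitted nuisance estimate $\hatmu$ plays no role in the excess-risk identity, which is precisely the content of Proposition \ref{lem:r-excess-risk}.
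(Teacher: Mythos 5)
Your overall architecture (high-probability events, the safe/not-safe case split via the misspecification test, the Azuma--Hoeffding conversion from expected to realized regret, and the use of Proposition \ref{lem:r-excess-risk} to eliminate $\hatmu$) matches the paper's proof. But your central step has a genuine gap: a mismatch between the kernel under which the oracle controls the excess risk and the kernel under which your per-round lemma needs it. Your per-context inequality, after averaging over $x$, produces the term $\tfrac{\gamma_m}{2}\,\calE_{p_m}(\hatg_m)$ --- the excess risk of $\hatg_m$ under the \emph{deployment} kernel $p_m$. Assumption \ref{ass:main-assumption}, however, only bounds the excess risk of $\hatg_m$ under the \emph{data-collection} kernel $p_{m-1}$ of the previous epoch (this is exactly how the paper defines the event $\eventReg$: it controls $\calE_{p_m}(\hatg_{m+1})$, i.e., model and kernel are matched to the training epoch). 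Since $p_m$ is chosen as a function of $\hatg_m$, it can concentrate mass precisely where the centered error $\langle e_a - p(x), \hatg_m(x,\cdot)-\starg(x,\cdot)\rangle$ is large relative to where $p_{m-1}$ sampled, so $\calE_{p_m}(\hatg_m)$ is not controlled by $\calE_{p_{m-1}}(\hatg_m)$ without further argument. Writing ``plugging in the oracle bound'' at this point is therefore not justified.

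This distribution-shift issue is exactly what the paper's policy-space machinery is built to handle, and it is the piece your proposal omits. The paper bounds the policy-evaluation error $|\Phi^{\hatg_{m+1}}_{p_m}(\pi)-\Phi_{p_m}(\pi)|$ for \emph{every} policy $\pi$ by Cauchy--Schwarz against the decisional divergence $V(p_m,\pi)=\E_x[1/p_m(\pi(x)|x)]$ and the matched excess risk $\calE_{p_m}(\hatg_{m+1})$ (Lemma \ref{lem:reg-est-accuracy}), bounds $V(p_m,\pi)\le K+\gamma_m\Reg_{\hatg_m}(\pi)$ from the IGW formula (Lemma \ref{lem:InverseProbBound}), and then runs an induction over epochs (Lemma \ref{lem:policy-reg-bound}) to show $\Reg(\pi)\le 2\Reg_{\hatg_m}(\pi)+C_1K/\gamma_m$ and its converse, which finally yields $\sum_\pi Q_m(\pi)\Reg(\pi)\le (2+C_1)K/\gamma_m$ (Lemma \ref{lem:QmRegTrue}) for the deployed randomized policy. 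A per-context SquareCB-style lemma of the kind you propose works when the oracle is online (error measured under the kernel actually played at that round), but with an offline epoch-based oracle the inductive, policy-level argument is needed to transfer accuracy from the training kernel to the next epoch's kernel. The rest of your outline (the $O(\sqrt{KB})$ per-round bound after $\msafe$ via Assumption \ref{ass:post-msafe-reg-bound}, and the final summation) is consistent with the paper once this bridge is in place.
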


Under epoch schedule $\tau_m=2^m$, Theorem \ref{thm:main-theorem} gives
\begin{equation}
    \text{Reg}(T)\le\calO(\sqrt{K\xi(T,\delta'/\log T)}T+\sqrt{KB}T).
\end{equation}
The first term is sublinear in $T$ (typically $\tilde\calO(\sqrt{T})$ with problem-dependent parametric rates on excess risk) since the excess risk rate decreases in the sample size, and the second term is evidently linear in $T$. In the Appendix \ref{app:r-learner-estimation-rate}, we discuss excess risk rates for the $R$-learner oracle under various important function classes. For a more concrete example, consider the following guarantees on the class of linear treatment effect models.

\begin{example}[Linear treatment effect]
    Consider the following class of linear treatment effects
    \begin{equation*}
        \calG=\left\{g:(x,a)\mapsto \langle\theta, x_a\rangle\mid\theta\in\R^d,||\theta||_2\le1\right\}
    \end{equation*}
    where $x=(x_a)_{a\in\calA}$, $x_a\in\R^d$, and $||x_a||_2\le 1$. Under a linear class, the $R$-learner reduces to linear least squares regression of residual outcomes on residual treatments. Therefore, we can use the excess risk rates for square error minimization used in \cite{simchi2021bypassing} to get the same optimal rate guarantees under realizability while obtaining stronger misspecification bounds compared to \cite{krishnamurthy2021adapting} due to our results in Proposition \ref{lem:misspecification-error-bound}. In particular, we have $\text{Reg}(T)\le\calO(\sqrt{KT(d+\log T)} + \sqrt{KB}T)$.
\end{example}
 
Note that Theorem \ref{thm:main-theorem} provides a bias-variance trade-off for contextual bandits. The first term in \eqref{eq:main-theorem} is the regret bound under realizability and it depends on the estimation rate, which captures the variance of the $R$-loss oracle estimate over the class $\calG$. The second term in \eqref{eq:main-theorem} is the regret overhead due to misspecification and it depends on $B$, which captures a notion of bias for the best estimator in the model class $\calG$ under the distribution induced by any probability kernel. For more expressive model classes, the bias term $B$ is small, but the variance term $\xi$ is large, showing that the bounds quantify a bias-variance trade-off for contextual bandits that rely on heterogeneous treatment effect estimation. Moreover, note that Theorem \ref{thm:main-theorem} gives a tighter quantification of the bias-variance trade-off for contextual bandits compared to \cite{krishnamurthy2021adapting} since misspecification under the $R$-learner loss is no greater than misspecification under the squared error loss as shown by Proposition \ref{lem:misspecification-error-bound}.

\paragraph{Limitations}
Our regret bound for the linear model class has worse dependence on the number of arms than the best known upper bound on stochastic contextual bandits \citep{li2019nearly}. This is an inherent limitation of the action-sampling strategy of inverse gap weighting \citet{abe1999associative} which does not make use of the additional structure of the model class. Another limitation is that the exploration rate of our algorithm relies on knowledge of $R$-loss excess risk estimation rates for the specified model class. We discuss instance-dependent estimation rates of the $R$-learner under broad classes of models. However, for any estimator and any model class, fast estimation rates may not be known. Further work on $R$-loss excess risk fast rates or setting the exploration rate in a more data-driven manner would bring practical benefits to our current approach.

\section{EXTENSIONS}\label{sec:extensions}

\paragraph{Model Selection}
Our algorithm depends on specifying a model class with a known estimation rate. We can consider procedures for automatically selecting the appropriate model class in an adaptive manner. In particular, we could easily adapt many methods of model selection \citep{cutkosky2021dynamic, foster2019model, krishnamurthy2021optimal, ghosh2021problem} to our setting. In our second experiment in Section \ref{sec:experiments}, we conduct a form of model selection on linear classes using LASSO regularization on the empirical $R$-loss risk objective. Using this, we adaptively set the excess risk rates based on the estimated sparsity (i.e., complexity) of the model.

\paragraph{Non-stationary Confounder}
Suppose the underlying action-independent component of the true conditional mean reward varied with time: $\starf_t(x,a)=\starg(x,a)+\starh_t(x)$ where $\starh_t$ is a time-varying function of contexts. This scenario can occur in practice where there are non-stationary effects that are independent of treatment recommendations to experimental units such as time-of-day effects on digital platform traffic. Non-stationary confounding factors can make reward model estimation more difficult and thus affect bandit performance as noted in \citet{qin2022adaptivity}. Our algorithm is more robust to such scenarios since it is able to estimate a stationary treatment effect when it exists.

\section{EXPERIMENTS}\label{sec:experiments}

In our experiments, we compare the performance of the $\HIGW$ algorithm with the similar inverse gap weighting algorithm (referred to as $\IGW$ here) that uses an offline least squares regression oracle to model the conditional reward function \citep{simchi2021bypassing,krishnamurthy2021adapting}. The reason is to isolate the benefits of using treatment effect models to make fair comparisons.

\subsection{Setup}

We describe the common setup across experiments. Define $\calS^{d}\coloneqq\{x\in\R^d : ||x||_2=1\}$ with $d=100$. For simplicity, we set $\calX=\calS^d$ and $\calA=\{1,2\}$. At each time step $t\le T$ for $T=10,000$, the learner observes an independently sampled context $x_t\sim\Uniform(\calX)$, takes an action $a_t\in\calA$, and observes a reward $r_t(a_t)=\starf(x_t,a_t)+\varepsilon_t=\starg(x_t,a_t)+\starh(x_t)+\varepsilon_t$ where $\varepsilon_t\sim\sqrt{12\sigma^2}\times\Uniform([-1/2,1/2])$ independently with $\sigma=1/10$.
The specification of $\starf=\starg+\starh$ is provided in each scenario below. We estimate the conditional mean realized reward nuisance parameter $\hatmu$ using cross-fitting. Notably, in every single experiment, we specify the linear function class $\calG=\{g:(x,a)\mapsto\langle\theta,x_a\rangle\mid\theta\in\R^d\}$ for model estimation.
We plot the total regret curve for each experiment. See the figures below.
The shaded areas represent a standard deviation from the mean curve over 25 seed runs for each experiment.

\subsection{Linear Reward \& Linear Treatment Effect}

First, we consider the setting where the conditional mean reward is linear in the contexts. This covers the baseline case where the reward and treatment effect have the same complexity and they are well-specified.
We set $\starf(x,a)=1+\langle\theta_a, x\rangle$ where $\theta_a\sim\Uniform(\calS^{d})$ independently for all $a\in\calA$. In this case, $\starh\equiv 0$.

We find that both algorithms have almost exactly the same performance. Both algorithms exhibit similar sublinear total regret.
This validates the observation that when the treatment effect and reward are equivalent, $\HIGW$ performs just as well as a reward-based method under realizability. See Figure \ref{fig:fig1}.

\begin{figure}[!htb]
  \centering
  \includegraphics[width=.85\linewidth]{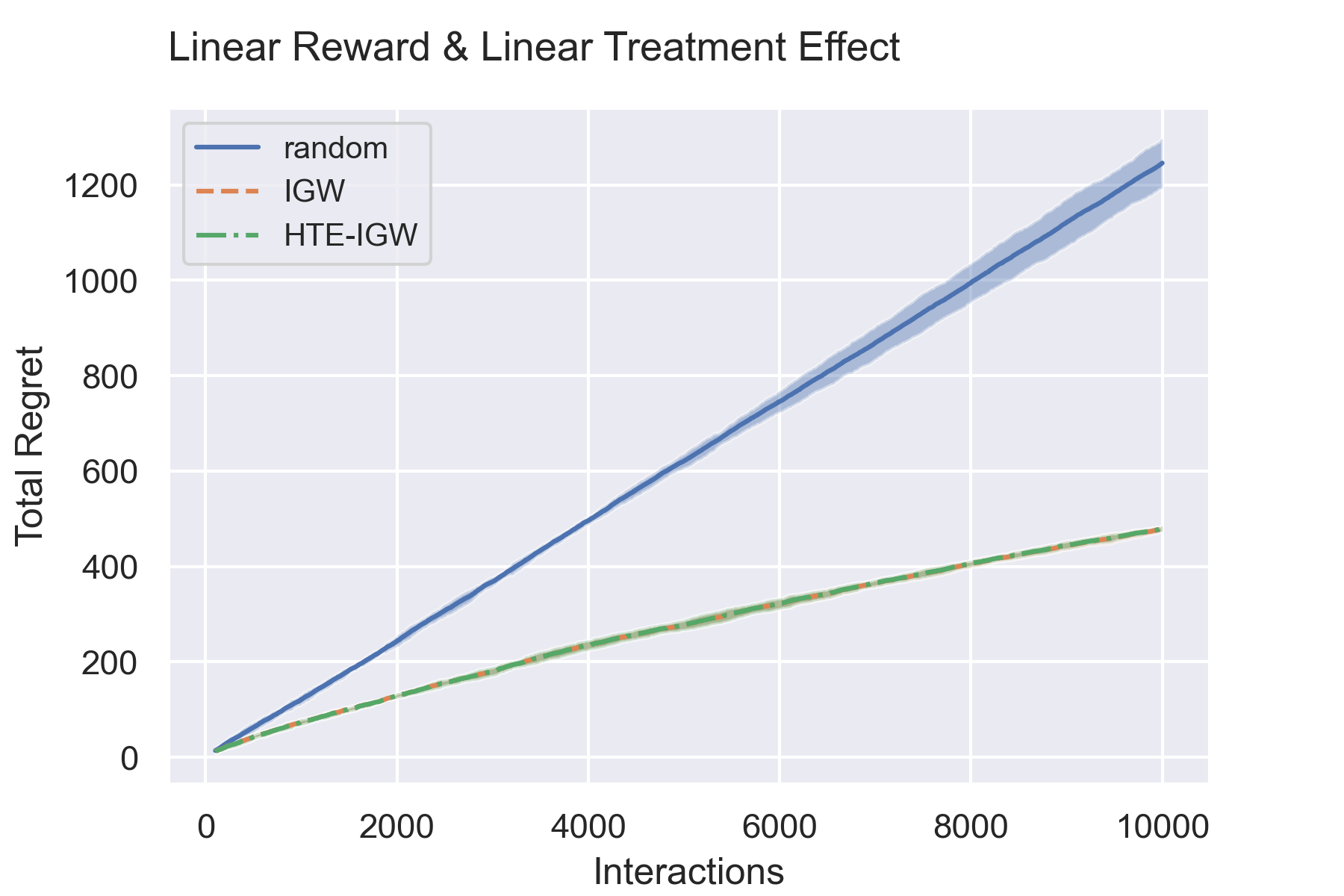}
  \caption{Total regret of synthetic experiment with linear reward and linear treatment effect.}
  \label{fig:fig1}
\end{figure}

\subsection{Linear Reward \& Constant Treatment Effect}

Next, we consider the setting where the treatment effect model lies in a simpler model class than the reward model, yet model realizability still holds for both the treatment effect and reward models. Specifically, the treatment effects are constant and independent of context with a large gap between the treatments, and the confounding term is linear in contexts. Clearly, a linear model class is capable of capturing both the constant treatment effect and linear reward model. We set $\starg(x,a)=u_a + \I(a=1)$ where $u_a\sim\Uniform([0,1])$ independently for all $a\in\calA$ and $\starh(x)=1+\langle\theta, x\rangle$ where $\theta\sim\Uniform(\calS^{d})$.

We find again that $\HIGW$ performs no worse than $\IGW$. In this case, the treatment effect model estimation procedure using a linear class is not capturing the sparsity of the true function. Therefore, we consider model selection to enforce simpler models and readjust exploration parameters according to the model complexity. See top figure in Figure \ref{fig:fig2}.

\paragraph{Model Selection}
We consider the same problem setup, but we use LASSO regularization as well on the risk objective. We also set the exploration rates based on estimated sparsity of the model. Since there is a form of model selection being used here, we refer to the respective algorithms as $\MHIGW$ and $\MIGW$ to make a distinction on their additional adaptivity beyond the base version of these algorithms.

In this case, we find that $\MHIGW$ significantly outperforms $\MIGW$. The $R$-loss oracle quickly finds that none of the context features are predictive of the treatment effects and zeros out all the coefficients besides the bias in the model. Since the treatment effect gap is large, $\MHIGW$ uses a simple model to very quickly find the optimal action and thereby induces a less aggressive exploration parameter. On the other hand, the least squares oracle finds that all the context features are predictive of the rewards since the confounding term has non-zero coefficients for each context. This induces a more aggressive exploration parameter to ensure the algorithm learns an accurate model of a more complex reward function. Essentially, $\MIGW$ over-explores on a less accurate model over longer period of time. We find that although both $\MIGW$ and $\MHIGW$ have sublinear total regret in this example, there is a large gap between them with $\MHIGW$ showing significant improvements in performance. This validates that when the treatment effect model lies in a simpler class than the reward model, treatment effect-based methods (with model selection) can vastly outperform the reward-based method. See bottom figure in Figure \ref{fig:fig2}.

\begin{figure}[!htb]
\centering
\begin{subfigure}[b]{.85\linewidth}
   \includegraphics[width=\linewidth]{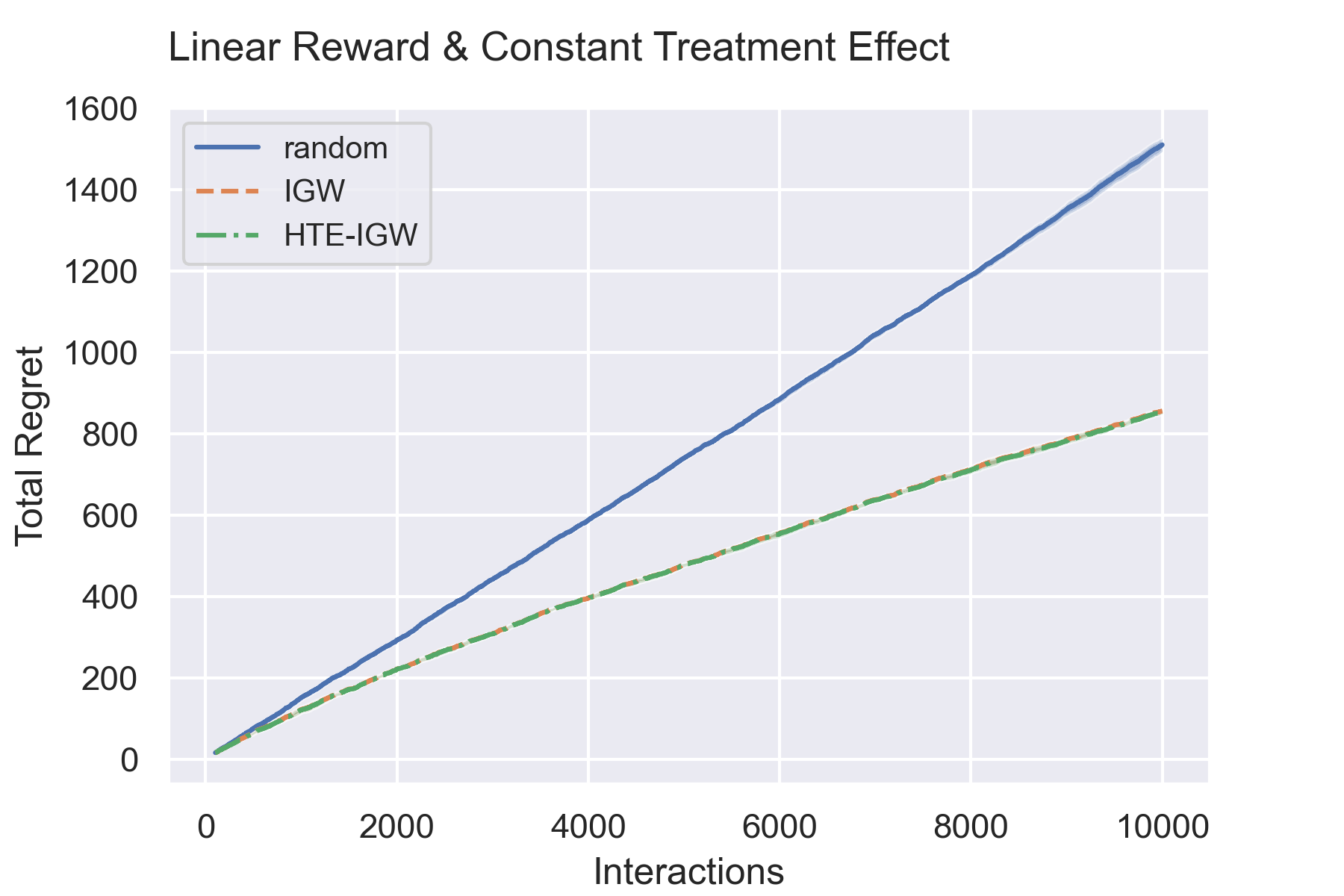}
   \label{fig:fig2sub1} 
\end{subfigure}
\begin{subfigure}[b]{.85\linewidth}
   \includegraphics[width=\linewidth]{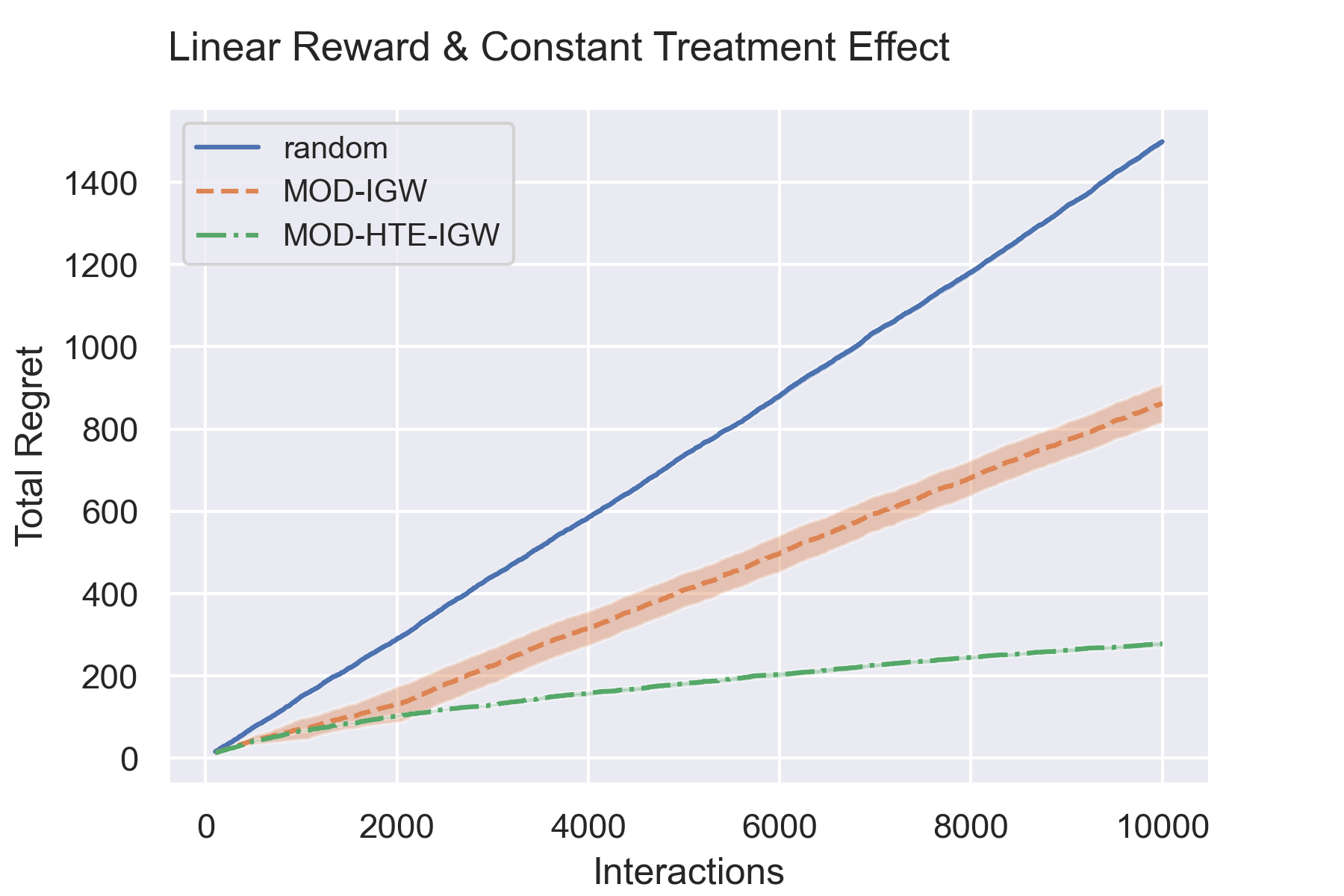}
   \label{fig:fig2sub2}
\end{subfigure}
\caption{Total regret of synthetic experiment with linear reward and constant treatment effect. Top: w/o model selection. Bottom: w/ model selection.}
\label{fig:fig2}
\end{figure}

\subsection{Step-wise Linear Reward \& Linear Treatment Effect}

Lastly, we consider the setting where the treatment effect model lies in a simpler model class than the reward model, and the reward model is misspecified. Specifically, the treatment effects are linear and the confounding term is step-wise linear. The linear model class is capable of capturing the linear treatment effect model but not the step-wise linear reward model. We set $\starg(x,a)=u_a + \I(a=1)$ where $u_a\sim\Uniform([0,1])$ independently for all $a\in\calA$, and $\starh(x)=-\mathbb{I}(x_0 > 1/4)\cdot\max_a(1+\langle\theta_a, x\rangle)$ where $\theta_a\sim\Uniform(\calS^{d})$.

We find that due to misspecification of the reward model, the total regret of $\IGW$ is not much better then random sampling, but $\HIGW$ manages to make noticeable improvements. See Figure \ref{fig:fig3}.

\begin{figure}[h]
  \centering
  \includegraphics[width=.85\linewidth]{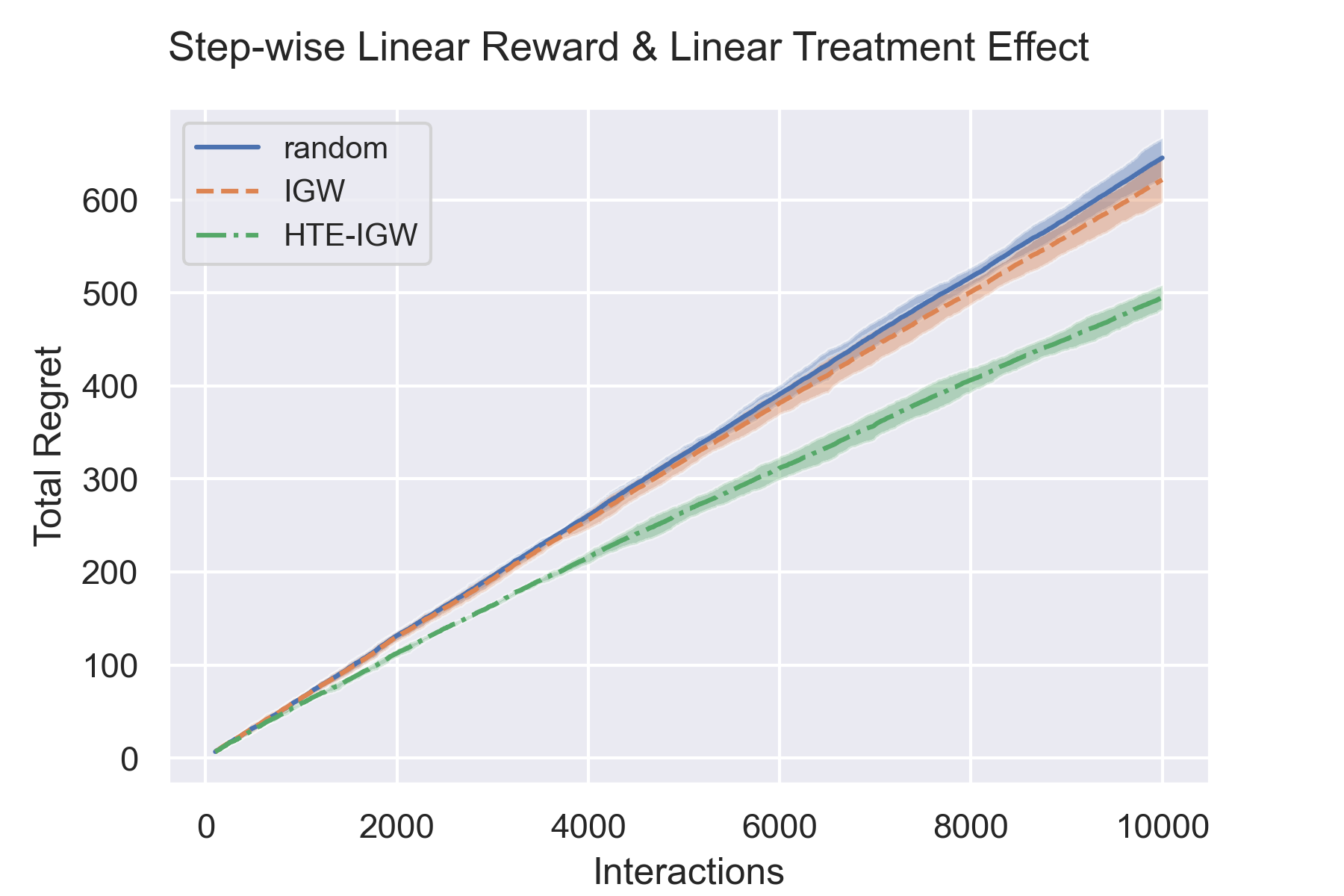}
\caption{Total regret of synthetic experiment with step-wise linear reward and linear treatment effect.}
\label{fig:fig3}
\end{figure}

\subsection{Perturbed Step-wise Linear Reward \& Perturbed Linear Treatment Effect}
We present an additional experiment for the case where the treatment effect model is also misspecified. We use a linear model class while the true treatment effects are linear with a sinusoidal perturbation and the confounding term is step-wise linear with a sinusoidal perturbation. Therefore, the true treatment effect and reward models are non-linear and misspecified under the linear model class. We set $g^*(x,a)=\mathbb{I}(a=1)+\langle\theta_a, x\rangle + \sin(\langle\theta_a, x\rangle)$ and $h^*(x)=-\mathbb{I}(x_0 > 1/4)\cdot\max_a(1+\langle\theta_a, x\rangle)$ where $\theta_a\sim\Uniform(\calS^d)$ independently for all $a\in\mathcal{A}$. We use a linear model class (with and without model selection). We observe that the performance of both $\HIGW$ and $\IGW$ suffer from model misspecification. However, with model selection, $\HIGW$ regains the performance, showing that $\HIGW$ is more capable of capturing the simplest model class necessary for decision making even under model misspecification. See Figure \ref{fig:fig4}.

\begin{figure}[!htb]
\centering
\begin{subfigure}[b]{.85\linewidth}
   \includegraphics[width=\linewidth]{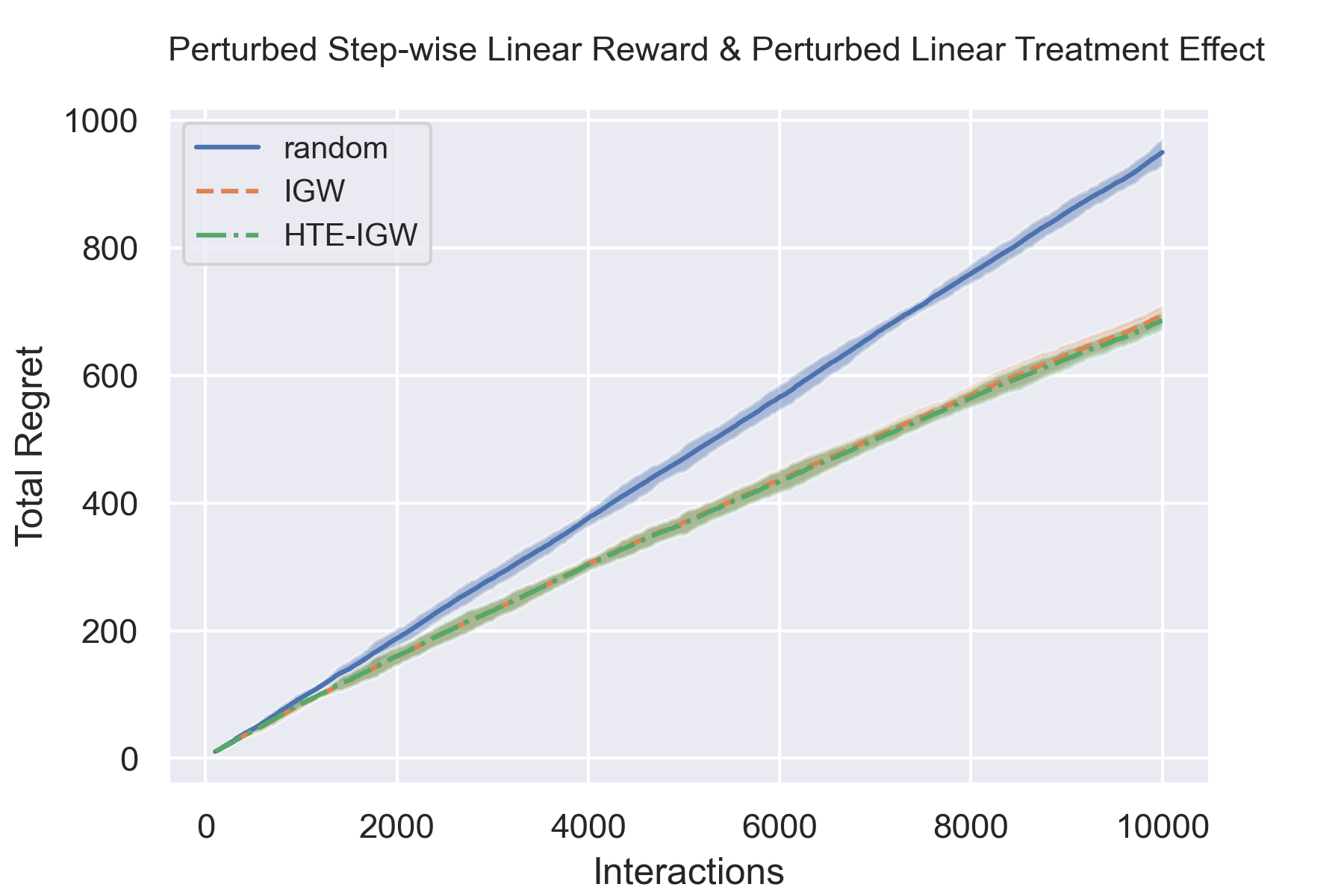}
   \label{fig:fig4sub1} 
\end{subfigure}
\begin{subfigure}[b]{.85\linewidth}
   \includegraphics[width=\linewidth]{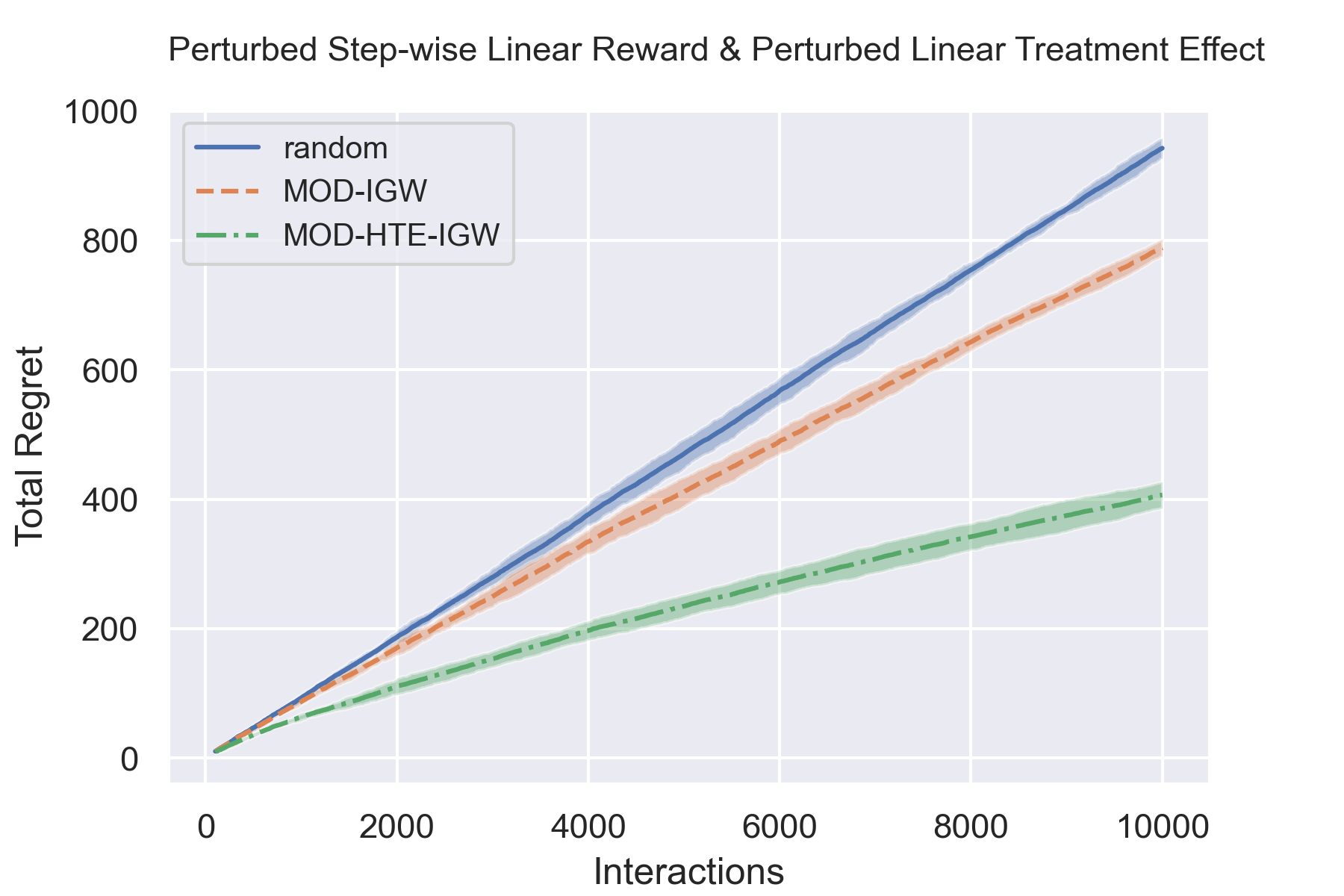}
   \label{fig:fig4sub2}
\end{subfigure}
\caption{Total regret of synthetic experiment with non-linear (sinusoidally perturbed step-wise linear) reward and linear treatment effect. Top: w/o model selection. Bottom: w/ model selection.}
\label{fig:fig4}
\end{figure}

\section{CONCLUSION}


In this paper, we presented the first universal reduction of cumulative regret minimization in stochastic contextual bandits to heterogeneous treatment effect estimation method via $R$-loss minimization. Based on this reduction, we develop a computationally efficient, flexible, general-purpose algorithm that leverages techniques presented in \citep{krishnamurthy2021adapting} to not require treatment effect realizability and to be more robust to misspecification. Furthermore, we showed that treatment effect estimation oracles are a more robust primitive for contextual bandits than squared error regression oracles as commonly used in reward estimation \citep{foster2020beyond, simchi2021bypassing}. Beyond cumulative regret minimization, heterogeneous treatment effect estimation should also lead to better contextual bandit algorithms for simple regret minimization---which still appears to be a challenging problem based on existing algorithms \citep{athey2022contextual}. We hope our findings will inspire further research on adapting econometric methods of treatment effect estimation for developing more robust and practical contextual bandits. 

\vfill

\subsubsection*{Acknowledgements}
The authors would like to thank Undral Byambadalai, Stefan Wager, Ruohan Zhan, and the reviewers for their helpful feedback. The authors are also grateful for the generous support provided by Golub Capital Social Impact Lab. S.A. and S.K.K. also acknowledge generous support from the Office of Naval Research grant N00014-19-1-2468.




\bibliographystyle{plainnat}
\bibliography{main}

\begin{thebibliography}{47}
\providecommand{\natexlab}[1]{#1}
\providecommand{\url}[1]{\texttt{#1}}
\expandafter\ifx\csname urlstyle\endcsname\relax
  \providecommand{\doi}[1]{doi: #1}\else
  \providecommand{\doi}{doi: \begingroup \urlstyle{rm}\Url}\fi

\bibitem[Abbasi-Yadkori et~al.(2011)Abbasi-Yadkori, P{\'a}l, and
  Szepesv{\'a}ri]{abbasi2011improved}
Yasin Abbasi-Yadkori, D{\'a}vid P{\'a}l, and Csaba Szepesv{\'a}ri.
\newblock Improved algorithms for linear stochastic bandits.
\newblock \emph{Advances in neural information processing systems}, 24, 2011.

\bibitem[Abe and Long(1999)]{abe1999associative}
Naoki Abe and Philip~M Long.
\newblock Associative reinforcement learning using linear probabilistic
  concepts.
\newblock In \emph{ICML}, pages 3--11. Citeseer, 1999.

\bibitem[Agarwal et~al.(2014)Agarwal, Hsu, Kale, Langford, Li, and
  Schapire]{agarwal2014taming}
Alekh Agarwal, Daniel Hsu, Satyen Kale, John Langford, Lihong Li, and Robert
  Schapire.
\newblock Taming the monster: A fast and simple algorithm for contextual
  bandits.
\newblock In \emph{International Conference on Machine Learning}, pages
  1638--1646. PMLR, 2014.

\bibitem[Agrawal and Goyal(2013)]{agrawal2013thompson}
Shipra Agrawal and Navin Goyal.
\newblock Thompson sampling for contextual bandits with linear payoffs.
\newblock In \emph{International conference on machine learning}, pages
  127--135. PMLR, 2013.

\bibitem[Athey et~al.(2019)Athey, Tibshirani, and Wager]{athey2019generalized}
Susan Athey, Julie Tibshirani, and Stefan Wager.
\newblock Generalized random forests.
\newblock \emph{The Annals of Statistics}, 47\penalty0 (2):\penalty0
  1148--1178, 2019.

\bibitem[Athey et~al.(2022)Athey, Byambadalai, Hadad, Krishnamurthy, Leung, and
  Williams]{athey2022contextual}
Susan Athey, Undral Byambadalai, Vitor Hadad, Sanath~Kumar Krishnamurthy,
  Weiwen Leung, and Joseph~Jay Williams.
\newblock Contextual bandits in a survey experiment on charitable giving:
  Within-experiment outcomes versus policy learning.
\newblock \emph{arXiv preprint arXiv:2211.12004}, 2022.

\bibitem[Auer et~al.(2002)Auer, Cesa-Bianchi, Freund, and
  Schapire]{auer2002nonstochastic}
Peter Auer, Nicolo Cesa-Bianchi, Yoav Freund, and Robert~E Schapire.
\newblock The nonstochastic multiarmed bandit problem.
\newblock \emph{SIAM journal on computing}, 32\penalty0 (1):\penalty0 48--77,
  2002.

\bibitem[Bartlett et~al.(2005)Bartlett, Bousquet, and
  Mendelson]{bartlett2005local}
Peter~L Bartlett, Olivier Bousquet, and Shahar Mendelson.
\newblock Local rademacher complexities.
\newblock \emph{The Annals of Statistics}, 33\penalty0 (4):\penalty0
  1497--1537, 2005.

\bibitem[Battocchi et~al.(2019)Battocchi, Dillon, Hei, Lewis, Oka, Oprescu, and
  Syrgkanis]{econml}
Keith Battocchi, Eleanor Dillon, Maggie Hei, Greg Lewis, Paul Oka, Miruna
  Oprescu, and Vasilis Syrgkanis.
\newblock Econml: A python package for ml-based heterogeneous treatment effects
  estimation.
\newblock https://github.com/microsoft/EconML, 2019.
\newblock Version 0.x.

\bibitem[Beygelzimer et~al.(2011)Beygelzimer, Langford, Li, Reyzin, and
  Schapire]{beygelzimer2011contextual}
Alina Beygelzimer, John Langford, Lihong Li, Lev Reyzin, and Robert Schapire.
\newblock Contextual bandit algorithms with supervised learning guarantees.
\newblock In \emph{Proceedings of the Fourteenth International Conference on
  Artificial Intelligence and Statistics}, pages 19--26. JMLR Workshop and
  Conference Proceedings, 2011.

\bibitem[Bietti et~al.(2021)Bietti, Agarwal, and
  Langford]{bietti2021contextual}
Alberto Bietti, Alekh Agarwal, and John Langford.
\newblock A contextual bandit bake-off.
\newblock \emph{Journal of Machine Learning Research}, 22\penalty0
  (133):\penalty0 1--49, 2021.

\bibitem[Chernozhukov et~al.(2017)Chernozhukov, Chetverikov, Demirer, Duflo,
  Hansen, Newey, and Robins]{chernozhukov2018double}
Victor Chernozhukov, Denis Chetverikov, Mert Demirer, Esther Duflo, Christian
  Hansen, Whitney Newey, and James~M. Robins.
\newblock Double/debiased machine learning for treatment and structural
  parameters.
\newblock \emph{Econometrics: Econometric \& Statistical Methods - General
  eJournal}, 2017.

\bibitem[Choi et~al.(2022)Choi, Kim, Paik, and Paik]{choi2022semiparametric}
Young-Geun Choi, Gi-Soo Kim, Seunghoon Paik, and Myunghee~Cho Paik.
\newblock Semi-parametric contextual bandits with graph-laplacian
  regularization.
\newblock \emph{arXiv preprint arXiv:2205.08295}, 2022.

\bibitem[Chu et~al.(2011)Chu, Li, Reyzin, and Schapire]{chu2011contextual}
Wei Chu, Lihong Li, Lev Reyzin, and Robert Schapire.
\newblock Contextual bandits with linear payoff functions.
\newblock In \emph{Proceedings of the Fourteenth International Conference on
  Artificial Intelligence and Statistics}, pages 208--214. JMLR Workshop and
  Conference Proceedings, 2011.

\bibitem[Coba(2022)]{coba}
Coba.
\newblock https://github.com/VowpalWabbit/coba, 2022.

\bibitem[Cutkosky et~al.(2021)Cutkosky, Dann, Das, Gentile, Pacchiano, and
  Purohit]{cutkosky2021dynamic}
Ashok Cutkosky, Christoph Dann, Abhimanyu Das, Claudio Gentile, Aldo Pacchiano,
  and Manish Purohit.
\newblock Dynamic balancing for model selection in bandits and rl.
\newblock In \emph{International Conference on Machine Learning}, pages
  2276--2285. PMLR, 2021.

\bibitem[Dudik et~al.(2011)Dudik, Hsu, Kale, Karampatziakis, Langford, Reyzin,
  and Zhang]{dudik2011efficient}
Miroslav Dudik, Daniel Hsu, Satyen Kale, Nikos Karampatziakis, John Langford,
  Lev Reyzin, and Tong Zhang.
\newblock Efficient optimal learning for contextual bandits.
\newblock \emph{arXiv preprint arXiv:1106.2369}, 2011.

\bibitem[Dud{\'\i}k et~al.(2015)Dud{\'\i}k, Hofmann, Schapire, Slivkins, and
  Zoghi]{dudik2015contextual}
Miroslav Dud{\'\i}k, Katja Hofmann, Robert~E Schapire, Aleksandrs Slivkins, and
  Masrour Zoghi.
\newblock Contextual dueling bandits.
\newblock In \emph{Conference on Learning Theory}, pages 563--587. PMLR, 2015.

\bibitem[Filippi et~al.(2010)Filippi, Cappe, Garivier, and
  Szepesv{\'a}ri]{filippi2010parametric}
Sarah Filippi, Olivier Cappe, Aur{\'e}lien Garivier, and Csaba Szepesv{\'a}ri.
\newblock Parametric bandits: The generalized linear case.
\newblock \emph{Advances in Neural Information Processing Systems}, 23, 2010.

\bibitem[Foster and Rakhlin(2020)]{foster2020beyond}
Dylan Foster and Alexander Rakhlin.
\newblock Beyond ucb: Optimal and efficient contextual bandits with regression
  oracles.
\newblock In \emph{International Conference on Machine Learning}, pages
  3199--3210. PMLR, 2020.

\bibitem[Foster et~al.(2018)Foster, Agarwal, Dudik, Luo, and
  Schapire]{foster2018practical}
Dylan Foster, Alekh Agarwal, Miroslav Dudik, Haipeng Luo, and Robert Schapire.
\newblock Practical contextual bandits with regression oracles.
\newblock In \emph{International Conference on Machine Learning}, pages
  1539--1548. PMLR, 2018.

\bibitem[Foster and Syrgkanis(2019)]{foster2019orthogonal}
Dylan~J Foster and Vasilis Syrgkanis.
\newblock Orthogonal statistical learning.
\newblock \emph{arXiv preprint arXiv:1901.09036}, 2019.

\bibitem[Foster et~al.(2019)Foster, Krishnamurthy, and Luo]{foster2019model}
Dylan~J Foster, Akshay Krishnamurthy, and Haipeng Luo.
\newblock Model selection for contextual bandits.
\newblock \emph{Advances in Neural Information Processing Systems}, 32, 2019.

\bibitem[Foster et~al.(2020)Foster, Gentile, Mohri, and
  Zimmert]{foster2020adapting}
Dylan~J Foster, Claudio Gentile, Mehryar Mohri, and Julian Zimmert.
\newblock Adapting to misspecification in contextual bandits.
\newblock \emph{Advances in Neural Information Processing Systems},
  33:\penalty0 11478--11489, 2020.

\bibitem[Ghosh et~al.(2017)Ghosh, Chowdhury, and
  Gopalan]{ghosh2017misspecified}
Avishek Ghosh, Sayak~Ray Chowdhury, and Aditya Gopalan.
\newblock Misspecified linear bandits.
\newblock In \emph{Thirty-First AAAI Conference on Artificial Intelligence},
  2017.

\bibitem[Ghosh et~al.(2021)Ghosh, Sankararaman, and Kannan]{ghosh2021problem}
Avishek Ghosh, Abishek Sankararaman, and Ramchandran Kannan.
\newblock Problem-complexity adaptive model selection for stochastic linear
  bandits.
\newblock In \emph{International Conference on Artificial Intelligence and
  Statistics}, pages 1396--1404. PMLR, 2021.

\bibitem[Greenewald et~al.(2017)Greenewald, Tewari, Murphy, and
  Klasnja]{greenewald2017action}
Kristjan Greenewald, Ambuj Tewari, Susan Murphy, and Predag Klasnja.
\newblock Action centered contextual bandits.
\newblock \emph{Advances in neural information processing systems}, 30, 2017.

\bibitem[Gur et~al.(2022)Gur, Momeni, and Wager]{gur2022smoothness}
Yonatan Gur, Ahmadreza Momeni, and Stefan Wager.
\newblock Smoothness-adaptive contextual bandits.
\newblock \emph{Operations Research}, 2022.

\bibitem[Hu et~al.(2020)Hu, Kallus, and Mao]{hu2020smooth}
Yichun Hu, Nathan Kallus, and Xiaojie Mao.
\newblock Smooth contextual bandits: Bridging the parametric and
  non-differentiable regret regimes.
\newblock In \emph{Conference on Learning Theory}, pages 2007--2010. PMLR,
  2020.

\bibitem[Kennedy(2020)]{kennedy2020optimal}
Edward~H Kennedy.
\newblock Optimal doubly robust estimation of heterogeneous causal effects.
\newblock \emph{arXiv preprint arXiv:2004.14497}, 2020.

\bibitem[Koltchinskii(2011)]{koltchinskii2011oracle}
Vladimir Koltchinskii.
\newblock \emph{Oracle inequalities in empirical risk minimization and sparse
  recovery problems: {\'E}cole D’{\'E}t{\'e} de Probabilit{\'e}s de
  Saint-Flour XXXVIII-2008}, volume 2033.
\newblock Springer Science \& Business Media, 2011.

\bibitem[Koltchinskii and Panchenko(2000)]{koltchinskii2000rademacher}
Vladimir Koltchinskii and Dmitriy Panchenko.
\newblock Rademacher processes and bounding the risk of function learning.
\newblock In \emph{High dimensional probability II}, pages 443--457. Springer,
  2000.

\bibitem[Krishnamurthy et~al.(2018)Krishnamurthy, Wu, and
  Syrgkanis]{krishnamurthy2018semiparametric}
Akshay Krishnamurthy, Zhiwei~Steven Wu, and Vasilis Syrgkanis.
\newblock Semiparametric contextual bandits.
\newblock In \emph{International Conference on Machine Learning}, pages
  2776--2785. PMLR, 2018.

\bibitem[Krishnamurthy and Athey(2021)]{krishnamurthy2021optimal}
Sanath~Kumar Krishnamurthy and Susan Athey.
\newblock Optimal model selection in contextual bandits with many classes via
  offline oracles.
\newblock \emph{arXiv preprint arXiv:2106.06483}, 2021.

\bibitem[Krishnamurthy et~al.(2021)Krishnamurthy, Hadad, and
  Athey]{krishnamurthy2021adapting}
Sanath~Kumar Krishnamurthy, Vitor Hadad, and Susan Athey.
\newblock Adapting to misspecification in contextual bandits with offline
  regression oracles.
\newblock In \emph{International Conference on Machine Learning}, pages
  5805--5814. PMLR, 2021.

\bibitem[K{\"u}nzel et~al.(2019)K{\"u}nzel, Sekhon, Bickel, and
  Yu]{kunzel2019metalearners}
S{\"o}ren~R K{\"u}nzel, Jasjeet~S Sekhon, Peter~J Bickel, and Bin Yu.
\newblock Metalearners for estimating heterogeneous treatment effects using
  machine learning.
\newblock \emph{Proceedings of the national academy of sciences}, 116\penalty0
  (10):\penalty0 4156--4165, 2019.

\bibitem[Langford and Zhang(2007)]{langford2007epoch}
John Langford and Tong Zhang.
\newblock The epoch-greedy algorithm for multi-armed bandits with side
  information.
\newblock \emph{Advances in neural information processing systems}, 20, 2007.

\bibitem[Lattimore et~al.(2020)Lattimore, Szepesvari, and
  Weisz]{lattimore2020learning}
Tor Lattimore, Csaba Szepesvari, and Gellert Weisz.
\newblock Learning with good feature representations in bandits and in rl with
  a generative model.
\newblock In \emph{International Conference on Machine Learning}, pages
  5662--5670. PMLR, 2020.

\bibitem[Li et~al.(2010)Li, Chu, Langford, and Schapire]{li2010contextual}
Lihong Li, Wei Chu, John Langford, and Robert~E Schapire.
\newblock A contextual-bandit approach to personalized news article
  recommendation.
\newblock In \emph{Proceedings of the 19th international conference on World
  wide web}, pages 661--670, 2010.

\bibitem[Li et~al.(2019)Li, Wang, and Zhou]{li2019nearly}
Yingkai Li, Yining Wang, and Yuan Zhou.
\newblock Nearly minimax-optimal regret for linearly parameterized bandits.
\newblock In \emph{Conference on Learning Theory}, pages 2173--2174. PMLR,
  2019.

\bibitem[Nie and Wager(2021)]{nie2021quasi}
Xinkun Nie and Stefan Wager.
\newblock Quasi-oracle estimation of heterogeneous treatment effects.
\newblock \emph{Biometrika}, 108\penalty0 (2):\penalty0 299--319, 2021.

\bibitem[Peng et~al.(2019)Peng, Xie, Liu, Meng, Li, Yang, Yao, and
  Jin]{peng2019practical}
Yi~Peng, Miao Xie, Jiahao Liu, Xuying Meng, Nan Li, Cheng Yang, Tao Yao, and
  Rong Jin.
\newblock A practical semi-parametric contextual bandit.
\newblock In \emph{IJCAI}, pages 3246--3252, 2019.

\bibitem[Qin and Russo(2022)]{qin2022adaptivity}
Chao Qin and Daniel Russo.
\newblock Adaptivity and confounding in multi-armed bandit experiments.
\newblock \emph{arXiv preprint arXiv:2202.09036}, 2022.

\bibitem[Robinson(1988)]{robinson1988root}
Peter~M Robinson.
\newblock Root-n-consistent semiparametric regression.
\newblock \emph{Econometrica: Journal of the Econometric Society}, pages
  931--954, 1988.

\bibitem[Saha and Krishnamurthy(2022)]{saha2022efficient}
Aadirupa Saha and Akshay Krishnamurthy.
\newblock Efficient and optimal algorithms for contextual dueling bandits under
  realizability.
\newblock In \emph{International Conference on Algorithmic Learning Theory},
  pages 968--994. PMLR, 2022.

\bibitem[Simchi-Levi and Xu(2021)]{simchi2021bypassing}
David Simchi-Levi and Yunzong Xu.
\newblock Bypassing the monster: A faster and simpler optimal algorithm for
  contextual bandits under realizability.
\newblock \emph{Mathematics of Operations Research}, 2021.

\bibitem[Xu and Zeevi(2020)]{xu2020towards}
Yunbei Xu and Assaf Zeevi.
\newblock Towards optimal problem dependent generalization error bounds in
  statistical learning theory.
\newblock \emph{arXiv preprint arXiv:2011.06186}, 2020.

\end{thebibliography}

\appendix
\onecolumn

\section{R-LOSS EXCESS RISK, MISSPECIFICATION, AND ESTIMATION RATES}
\label{app:r-loss-details}

\subsection{Equivalent Form of Excess Risk}
\label{app:excess-risk-id}

Proposition \ref{lem:r-excess-risk} states an equivalent form for the $R$-loss excess risk. Notice how this form is independent of the conditional mean realized reward estimate $\hatmu$ that is fixed with respect to the data-generating distribution (as in a constant or cross-fitted estimate). Proposition \ref{lem:r-excess-risk} will later allow us to prove Proposition \ref{lem:misspecification-error-bound} and to bound in Lemma \ref{lem:reg-est-accuracy} the error of the treatment effect estimate using bounds on the excess risk of the $R$-loss oracle estimate.
\lemExcessRisk*
\begin{proof}
    Consider any $g:\calX\times\calA\to[0,1]$.
    For conciseness in this proof, we shorten the notation $g(x)\coloneqq g(x,\cdot)=(g(x,a))_{a\in\calA}$. First of all, we have
     \begin{align*}
        \calR_p(g)&=\E\left[(r(a)-\hatmu(x)-\langle e_a-p(x), g(x)\rangle)^2\right] \\
        &=\E[(r(a)-\starf(x,a)+\starf(x,a)-\hatmu(x)-\langle e_a-p(x),g(x)\rangle)^2] \\
        &\ \begin{aligned}
            =&\E[(r(a)-\starf(x,a))^2]+\E[(\starf(x,a)-\hatmu(x)-\langle e_a-p(x),g(x)\rangle)^2] \\
            &-2\E[(r(a)-\starf(x,a))\cdot(\starf(x,a)-\hatmu(x)-\langle e_a-p(x),g(x)\rangle)].
        \end{aligned}
    \end{align*}
    Since $\hatmu$ is independent of $(x,a,r(a))$, the last term of this last equation is
    \begin{align*}
        &\E[(r(a)-\starf(x,a))\cdot(\starf(x,a)-\hatmu(x)-\langle e_a-p(x),g(x)\rangle)] \\
        &=\E_{x,a}[\E_r[(r(a)-\starf(x,a))\cdot(\starf(x,a)-\hatmu(x)-\langle e_a-p(x),g(x)\rangle)\mid x, a]] \\
        &=\E_{x,a}[(\E_r[r(a)|x,a]-\starf(x,a))\cdot(\starf(x,a)-\hatmu(x)-\langle e_a-p(x),g(x)\rangle)] \\
        &=\E_{x,a}[(\starf(x,a)-\starf(x,a))\cdot(\starf(x,a)-\hatmu(x)-\langle e_a-p(x),g(x)\rangle)] \\
        &=0
    \end{align*}
    and the second term is
    \begin{align*}
        &\E[(\starf(x,a)-\hatmu(x)-\langle e_a-p(x),g(x)\rangle)^2] \\
        &=\E\left[\left(\left(\starf(x,a)-\E_{a'\sim p(\cdot|x)}[\starf(x,a')]\right)-\langle e_a-p(x),g(x)\rangle+\left(\E_{a'\sim p(\cdot|x)}[\starf(x,a')]-\hatmu(x)\right)\right)^2\right] \\
        &=\E\left[\left(\langle e_a-p(x),\starf(x)\rangle-\langle e_a-p(x),g(x)\rangle+\Big(\mu(x)-\hatmu(x)\Big)\right)^2\right] \\
        &=\E\left[\left(\langle e_a-p(x),\starf(x)-g(x)\rangle+\Big(\mu(x)-\hatmu(x)\Big)\right)^2\right] \\
        &=\E\left[\langle e_a-p(x),\starf(x)-g(x)\rangle^2\right]+\E[(\mu(x)-\hatmu(x))^2]+2\E[(\mu(x)-\hatmu(x))\cdot\langle e_a-p(x),\starf(x)-g(x)\rangle] \\
        &=\E\left[\langle e_a-p(x),\starf(x)-g(x)\rangle^2\right]+\E[(\mu(x)-\hatmu(x))^2]+2\E_x[\E_{a\sim p(\cdot|x)}[(\mu(x)-\hatmu(x))\cdot\langle e_a-p(x),\starf(x)-g(x)\rangle\mid x]] \\
        &=\E\left[\langle e_a-p(x),\starf(x)-g(x)\rangle^2\right]+\E[(\mu(x)-\hatmu(x))^2]+2\E_x[(\mu(x)-\hatmu(x))\cdot\E_{a\sim p(\cdot|x)}[\langle e_a-p(x),\starf(x)-g(x)\rangle\mid x]] \\
        &=\E\left[\langle e_a-p(x),\starf(x)-g(x)\rangle^2\right]+\E[(\mu(x)-\hatmu(x))^2]+2\E_x[(\mu(x)-\hatmu(x))\cdot\langle p(x)-p(x),\starf(x)-g(x)\rangle] \\
        &=\E\left[\langle e_a-p(x),\starf(x)-g(x)\rangle^2\right]+\E[(\mu(x)-\hatmu(x))^2].
    \end{align*}
    
    Therefore,
    \begin{align*}
        \calR_p(g)&=\E[(r(a)-\starf(x,a))^2]+\E\left[\langle e_a-p(x),\starf(x)-g(x)\rangle^2\right]+\E[(\mu(x)-\hatmu(x))^2].
    \end{align*}
    This implies that the set of true treatment effect models $\argmin_{g'}\calR_p(g')$ are the functions $g$ that satisfy $\langle e_a-p(x),g(x)\rangle=\langle e_a-p(x),\starf(x)\rangle$ for any $(x,a)\in\calX\times\calA$. This is the set of true treatment effect functions
    \begin{align*}
        \calG^*=\{g:\calX\times\calA\to[0,1]\mid\exists h\!:\!\calX\to[0,1]\text{ s.t. }g(x,a)+h(x)=\starf(x,a), \forall (x,a)\in\calX\times\calA\Big\}.
    \end{align*}
    Therefore, for any $\starg\in\argmin_{g'}\calR_p(g')$,
    \begin{align*}
        \calE_p(g)&=\calR_p(g)-\calR_p(\starg) \\
        &=\E[\langle e_a-p(x),\starf(x)-g(x)\rangle^2]-\E[\langle e_a-p(x),\starf(x)-\starg(x)\rangle^2] \\
        &=\E[\langle e_a-p(x),\starg(x)-g(x)\rangle^2].
    \end{align*}
\end{proof}

Note that from the proof we can see that this form of the excess risk is equivalent to the excess risk that uses the true nuisance parameters inside the loss function. Since the rest of our analysis relies on bounds on the excess risk, this shows that we lose nothing by relying on this form of the $R$-loss risk in our analysis, assuming the estimate $\hatmu$ of $\mu$ is fixed.


\subsection{Bound on Misspecification Error}
\label{app:misspecification-bound}

Proposition \ref{lem:misspecification-error-bound} upper bounds the excess risk of the $R$-learner loss by the excess risk of the squared error loss.

\lemMisspecificationBound*
\begin{proof}
    Let $g\in\calG$ and $\starg\in\argmin_{g'}\calR_p(g')$. Also, let $\Delta_{x,a}\coloneqq g(x,a)-\starg(x,a)$. Then, from Proposition \ref{lem:r-excess-risk} we have
    \begin{align*}
        \calE_p(g)&=\E\left[\langle e_a-p(x), g(x,\cdot)-\starg(x,\cdot)\rangle^2\right] \\
        &=\E\left[\left(\big(g(x,a)-\starg(x,a)\big)-\E_{a'\sim p(\cdot|x)}[g(x,a')-\starg(x,a')]\right)^2\right] \\
        &=\E\left[\left(\Delta_{x,a}-\E_{a'\sim p(\cdot|x)}[\Delta_{x,a'}]\right)^2\right] \\
        &=\E\left[\Delta_{x,a}^2+\E_{a'\sim p(\cdot|x)}[\Delta_{x,a'}]^2-2\E_{a'\sim p(\cdot|x)}[\Delta_{x,a'}]\cdot\Delta_{x,a}\right] \\
        &=\E\left[\Delta_{x,a}^2\right]+\E_{x\sim D_\calX}\left[\E_{a\sim p(\cdot|x)}[\Delta_{x,a}]^2\right]-2\E_{x\sim D_\calX}\left[\E_{a'\sim p(\cdot|x)}[\Delta_{x,a'}]\cdot\E_{a\sim p(\cdot|x)}\big[\Delta_{x,a} \ \big|\ x\big]\right] \\
        &=\E\left[\Delta_{x,a}^2\right]+\E_{x\sim D_\calX}\left[\E_{a\sim p(\cdot|x)}[\Delta_{x,a}]^2\right]-2\E_{x\sim D_\calX}\left[\E_{a\sim p(\cdot|x)}[\Delta_{x,a}]^2\right] \\
        &=\E\left[\Delta_{x,a}^2\right]-\E_{x\sim D_\calX}\left[\E_{a\sim p(\cdot|x)}[\Delta_{x,a}]^2\right] \\
        &=\E\left[\big(g(x,a)-\starg(x,a)\big)^2\right] - \E_{x\sim D_\calX}\left[\E_{a\sim p(\cdot|x)}\big[g(x,a)-\starg(x,a)\big]^2\right] \\
        &\le\E\left[\big(g(x,a)-\starg(x,a)\big)^2\right].
    \end{align*}
    Since $\starf\in\argmin_{g'}\calR_p(g')$, this implies that
    \begin{align*}
        B\le\max_{p\in\calP}\min_{f\in\calG}\E\left[\big(f(x,a)-\starf(x,a)\big)^2\right].
    \end{align*}
\end{proof}

In other words, for a chosen model class $\calG$, the average misspecification error of the $R$-loss risk is upper bounded by the average misspecification error of the square error risk. A square error oracle is usually used to estimate the reward model in \citet{simchi2021bypassing}. This implies that heterogeneous treatment effect estimation via $R$-loss oracles is more robust to model misspecification than reward estimation via least squares oracles. This can be interpreted to mean that heterogeneous treatment effect estimation is no harder than reward estimation.

\paragraph{Misspecification Error Bound Intuition}

We provide further intuition to the previous result.
Essentially, the set of true treatment effects
\begin{equation*}
    \calG^*\coloneqq\argmin_{g'}\calR_p(g')=\{g:\calX\times\calA\to[0,1]\mid\exists h:\calX\to[0,1]\text{ s.t. } g(x,a)+h(x)=f^*(x,a),\forall(x,a)\in\calX\times\calA\}
\end{equation*}
specifies a buffer zone around the true reward model $\starf$ of sufficiently valid estimands for decision-making. This set contains infinitely many possible target estimands, including the true reward model $f^*$. Therefore, the ``distance" $B_R$ from any fixed model class $\calG$
to the ``nearest" estimand $g^*$
(corresponding to the misspecification error under the $R$-loss, i.e., the left hand side of inequality \eqref{eq:misspecification-error-bound})
in this buffer zone must necessarily be no further than the ``distance" $B_{\text{sq}}$ to the true reward model $f^*$ (corresponding to the misspecification error under the squared error loss, i.e., the right-hand side of inequality \eqref{eq:misspecification-error-bound}).
See Figure \ref{fig:R-loss-intuition} for a graphical representation of this observation.
Note that $\calG$ may possibly not contain $\starf$ even while at the same time possibly still overlap with $\calG^*$, in which case the treatment effect model would be well-specified while the reward model would be misspecified. If the reward model is well-specified, then the treatment effect model is well-specified. However, the converse may not necessarily be true.

\begin{figure}[h]
\centering
\includegraphics[width=.9\linewidth]{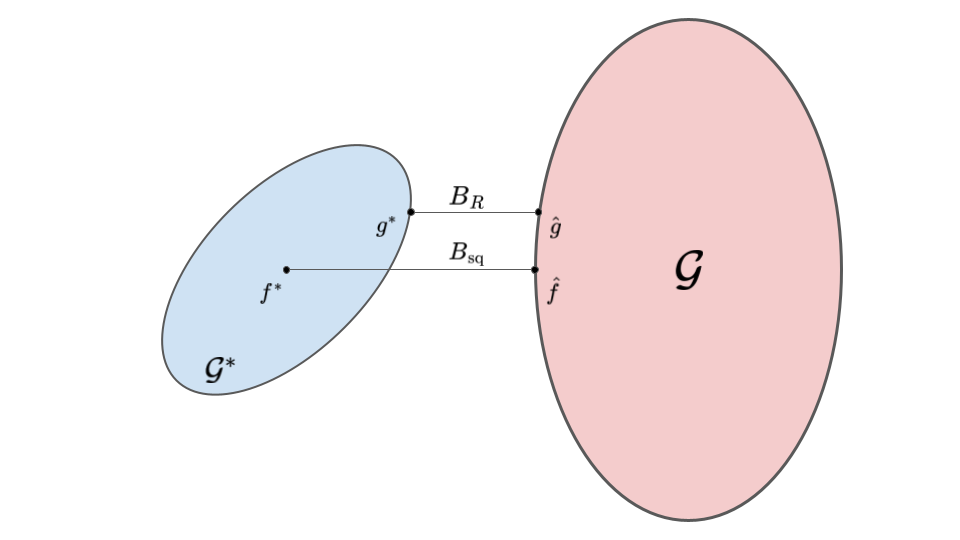}
\caption{Intuition behind differences between $R$-loss vs.~squared error loss misspecification error. $B_R$ denotes the misspecification error of the $R$-loss, and $B_{\text{sq}}$ denotes the misspecification error of the squared error loss.}
\label{fig:R-loss-intuition}
\end{figure}

\subsection{R-learner Oracle Estimation Rates}
\label{app:r-learner-estimation-rate}

\citet{bartlett2005local} and \citet{xu2020towards} discuss problem-dependent rates on the excess risk of a model trained via empirical risk minimization. We leverage their loss-dependent rates \citep[see Equation 4.3 in][]{xu2020towards} and upper bound them with probability $1-\delta$ as
\begin{align}
    \calE_p(\hatg)\le\calO\left(r^* \vee \sqrt{Br^*}\right) \le \calO\left(r^*+B\right)
\end{align}
where $B$ is the average misspecification error under the given loss and $r^*$ is the fixed point of a ``surrogate'' function $\psi(r;\delta)$ that upper bounds the uniform error within a localized region of the model class. Note that we allow for a loose bound on the dependence on $B$ because our regret guarantees are stated with an additive regret overhead due to misspecification that depends linearly on $B$, so this error can be absorbed into the misspecification overhead. This also makes it clear how misspecification issues can be separated from estimation in our algorihtm since the bound on regret due to misspecification is covered by the safety check guarantees and so misspecification does not need to shape the exploration strategy to achieve our regret bounds.

Clearly, the fixed point depends on the model and the probability level $\delta$. However, note that the following bounds on the fixed point of the surrogate function hide a logarithmic dependence on $\delta$, typically of the form $\log(1/\delta)$ for the sake of simplicity as well. For VC-type classes defined by the class of models whose metric entropy is $\calO(d\log\frac{1}{\varepsilon})$ for any covering net size $\epsilon>0$ where $d$ is the VC dimension, it can be proven \citep{koltchinskii2000rademacher, xu2020towards} that $r^*\le\calO(\frac{d\log n}{n})$. Therefore, VC-type classes have $R$-loss excess risk bound $\calO(\frac{d\log n}{n}+ B)$. Note that linear model classes with $d$-dimensional coefficients are VC-type classes with VC dimension equal to $d$. Similarly, for any general non-parametric class of models whose metric entropy is $\calO(\varepsilon^{-2\rho})$ for any covering net size $\epsilon>0$ where $\rho\in(0,1)$ is a constant, it can be proven that $r^*\le\calO(n^{-\frac{1}{1+\rho}})$. Therefore, non-parametric classes of polynomial growth have $R$-loss excess risk bound $\calO(n^{-\frac{1}{1+\rho}}+B)$.

\section{ALGORITHM ANALYSIS}
\label{app:analysis}

\subsection{Definitions}

Recall we have a class of treatment effect models $\calG\subset\{g:\calX\times\calA\to[0,1]\}$ and the set of true treatment effect models is $\starcalG\coloneqq\argmin_{g'}\calR_p(g')$ where the infimum is taken with respect to all functions. The following analysis is stated with respect to a fixed $\starg\in\starcalG$. This is without a loss of generalization because the subsequent results are the same independent of the choice of $\starg\in\starcalG$. With an abuse of notation, for any function $g:\calX\times\calA\to[0,1]$, we let $g(x)$ denote the vector $g(x,\cdot)=(g(x,a))_{a\in\calA}$.

A policy $\pi:\calX\to\calA$ is a deterministic mapping from contexts to actions. Let $\Psi = \A^{\Xscript}$ be the universal policy space containing all possible policies. The policy $\pi_g$ that is induced by a model $g\in\calG$ is given by $\pi_g(x)\coloneqq\argmax_a g(x,a)$ for every $x\in\calX$. Let $\pi^*$ denote the policy induced by $\starg$, i.e., $\pi^*\coloneqq\argmax_a g^*(x,a)$. The expected instantaneous treatment effect of a policy $\pi\in\Psi$ with respect to a treatment effect model $g$ and probability kernel $p:\calX\to\Delta_\calA$ is defined as
\begin{equation}\label{eq:instantaneous-effect}
    \Phi_p^g(\pi) \coloneqq \E_{x \sim D_\Xscript}[\langle e_{\pi(x)}-p(x), g(x)\rangle]=\E_{x\sim D_\calX}[g(x,\pi(x))-\sum_{a\in\calA}p(a|x)g(x,a)].
\end{equation}
Moreover, we write $\Phi_p(\pi)$ to mean $\Phi^{g^*}_p(\pi)$, the true expected instantaneous treatment effect for policy $\pi$ under probability kernel $p$. We define the expected instantaneous regret of a policy $\pi$ with respect to the treatment effect model $g$ as
\begin{equation}\label{eq:instantaneous-regret}
    \Reg_g(\pi)\coloneqq \Phi_p^g(\pi_g)-\Phi_p^g(\pi)=\E_{x\sim D_\calX}[g(x,\pi_g(x))-g(x,\pi(x))].
\end{equation}
Note that by this definition the expected instantaneous regret for any policy is independent of the choice of probability kernel. We write $\Reg(\pi)$ to mean $\Reg_{g^*}(\pi)$, the true expected instantaneous regret for policy $\pi$.

We also consider the true expected instantaneous reward of a policy $\pi$ defined as
\begin{equation}\label{eq:instantaneous-reward}
    R(\pi)\coloneqq\E_{x\sim D_\calX}[f^*(x,\pi(x))].
\end{equation}
Note that, according to the discussion in Appendix \ref{app:excess-risk-id} on the set of true treatment effect models, the following equivalences for the true expected instantaneous regret hold:
\begin{align}\label{eq:true-instantaneous-regret}
    \Reg(\pi)&=\Phi_p(\pi^*)-\Phi_p(\pi)=\E_{x\sim D_\calX}[g^*(x,\pi^*(x))-g^*(x,\pi(x))] \\
    &=R(\pi^*)-R(\pi)=\E_{x\sim D_\calX}[f^*(x,\pi^*(x))-f^*(x,\pi(x))].
\end{align}

\begin{remark}\label{rmk:equiv-def}
Although it is constructed differently through treatment effect models, this definition of the true expected instantaneous regret $\Reg(\pi)$ in fact is equal to the true expected instantaneous regret in \cite{simchi2021bypassing, krishnamurthy2021adapting}. This allows us to carry out more or less a similar analysis in our proofs.
\end{remark}

Next, for any probability kernel $p$ and any policy $\pi$, we let $V(p,\pi)$ denote the expected inverse probability of policy $\pi$ under kernel $p$, \footnote{In \citet{simchi2021bypassing}, this term is called the decisional divergence between the randomized policy $Q_p$ and deterministic policy $\pi$.} defined as
\begin{equation}\label{eq:decisional-divergence}
    V(p,\pi)\coloneqq\E_{x\sim D_{\Xscript}}\left[\frac{1}{p(\pi(x)|x)}\right].
\end{equation}
This is a crucial quantity in our analysis that measures the overlap between a policy and a data-sampling kernel.
Next, we let $\Gamma_t$ denote the set of observations up to and including time $t$, that is,
\begin{equation}
    \Gamma_t\coloneqq\left\{(x_s,a_s,r_s(a_s))\right\}_{s=1}^t.
\end{equation}
Lastly, we let $m(t)$ denote the epoch associated to round $t$.

\subsection{Action Selection Kernel as a Randomized Policy}
Given any probability kernel $p$, from Lemma 3 in \citet{simchi2021bypassing}, there exists a unique product probability measure on $\Psi$, given by:
\begin{equation}
    \label{eq:q-product}
    Q_p(\pi) \coloneqq \prod_{x\in\Xscript} p(\pi(x)|x).
\end{equation}
This measure satisfies the following property
\begin{align}
    \label{eq:connecting-p-and-Q}
  p(a|x) = \sum_{\pi\in\Psi} \I\{\pi(x)=a\}Q_p(\pi).
\end{align}
Since any probability kernel $p$ induces the distribution $Q_p$ over the set of deterministic policies $\Psi$, we can think of $Q_p$ as a randomized policy induced by $p$. Equations \eqref{eq:connecting-p-and-Q} and \eqref{eq:q-product} establish a correspondence between the probability kernel $p$ and the induced randomized policy $Q_p$.
For short hand, we let $Q_m\equiv Q_{p_m}$ for any epoch $m$.

\subsection{Safe Epoch}
We say an epoch $m$ is safe when the status of the algorithm at the end of the epoch is safe, that is, when the variable $\safe$ is still set to $\true$ at the end of this epoch. Let $\msafealg$ be the last safe epoch determined by the $\CheckAndChooseSafe$ subroutine in Algorithm 1. Note that, for all $m\leq \msafealg$, the epoch $m$ is safe. Now let $\msafe$ be such that:
\begin{equation}
    \label{eq:msafe-2}
    \begin{aligned}
      \msafe\coloneqq\max\left\{ m \mid C_0B \leq \xi\Big(\tau_m-\tau_{m-1}, \frac{\delta'}{m^2} \Big) \right\}.
    \end{aligned}
\end{equation}
The epoch $\msafe$ is critical to our theoretical analysis. It is the last epoch before the estimation rate of the $R$-loss oracle goes below the misspecification level.

\subsection{High Probability Event}
\label{sec:high-prob-events-1}

Theorem \ref{thm:main-theorem}
provides certain high probability bounds on cumulative regret. As a preliminary step in these proofs, it will be helpful to show that the event $\eventReg$ defined below hold with high probability. Event $\eventReg$ ensures that our treatment effect estimates are ``good" models for the first few epochs. This describes the event where, for any epoch $m\le\min(m^*,\hatm)$, the $R$-loss risk of the estimated model $\hatg_{m+1}$ can be bounded purely in terms of the known estimation rate of the algorithm:
\begin{equation}
    \label{eq:w-event-Reg}
    \begin{aligned}
      \eventReg\coloneqq\left\{\forall m\le\min(\msafe,\msafealg), \calE_{p_m}(\hatg_{m+1}) \leq 2\xi\left(\tau_m-\tau_{m-1}, \frac{\delta'}{m^2} \right) \right\}.
    \end{aligned}
\end{equation}

\begin{lemma}\label{lem:high-prob-eventReg}
    Suppose the $R$-loss oracle satisfies Assumption \ref{ass:main-assumption}.
    Then the event $\eventReg$ holds with probability at least $1-\delta$.
\end{lemma}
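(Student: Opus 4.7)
}
The plan is a direct application of Assumption \ref{ass:main-assumption} per-epoch combined with the defining property of $\msafe$, closed off with a union bound. The key observation is that on any safe epoch $m$ (i.e.~$m\le\msafealg$), the algorithm draws every action during epoch $m$ from the fixed kernel $p_m$, so the batch $H_{\tau_m}\setminus H_{\tau_{m-1}}$ consists of $n_m := \tau_m-\tau_{m-1}$ i.i.d.~samples from $D(p_m)$, which is exactly the input distribution required by the oracle assumption. (This i.i.d.~structure within an epoch is the content of Remark 1 in Section \ref{sec:oracle}.)

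First I would fix an arbitrary epoch index $m\ge 1$ and instantiate Assumption \ref{ass:main-assumption} on the oracle call that produces $\hatg_{m+1}$, choosing the confidence parameter $\zeta_m := \delta'/m^2$. This yields
\begin{equation*}
    \calE_{p_m}(\hatg_{m+1}) \;\le\; \xi\!\left(n_m,\tfrac{\delta'}{m^2}\right) + C_0\misspecError
\end{equation*}
with probability at least $1-\delta'/m^2$, on the event that epoch $m$ is safe (so that the oracle is actually invoked on a batch sampled i.i.d.~from $D(p_m)$).

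Next I would use the definition of $\msafe$ in \eqref{eq:msafe-2}: for every $m\le\msafe$, we have $C_0\misspecError \le \xi(n_m, \delta'/m^2)$. Substituting into the previous display and absorbing the misspecification term into the estimation rate gives
\begin{equation*}
    \calE_{p_m}(\hatg_{m+1}) \;\le\; 2\,\xi\!\left(n_m,\tfrac{\delta'}{m^2}\right),
\end{equation*}
which is precisely the bound appearing in $\eventReg$. Combined with the safety restriction $m\le\msafealg$ that guarantees the oracle is called on i.i.d.~data from $D(p_m)$, this handles every $m\le\min(\msafe,\msafealg)$.

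Finally I would take a union bound of the Assumption \ref{ass:main-assumption} failure events across all epochs $m\ge 1$. The total failure probability is at most
\begin{equation*}
    \sum_{m=1}^{\infty}\frac{\delta'}{m^2} \;=\; \frac{\pi^2}{6}\,\delta' \;\le\; \delta,
\end{equation*}
using $\delta'=\delta/2$. Hence $\eventReg$ holds with probability at least $1-\delta$, as claimed. There is no serious obstacle here: the only subtlety is the i.i.d.-within-an-epoch point, which is taken as a standing technical simplification (Remark 1), and verifying that $\msafe$ is a deterministic quantity (it depends only on $B$ and the epoch schedule) so that it can appear inside the union bound without further measurability concerns.
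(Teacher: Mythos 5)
Your proposal is correct and follows essentially the same route as the paper's proof: a per-epoch application of Assumption \ref{ass:main-assumption} with confidence level $\delta'/m^2$, absorption of the $C_0\misspecError$ term via the definition of $\msafe$ in \eqref{eq:msafe-2}, and a union bound $\sum_{m\ge 1}\delta'/m^2\le\delta$. The extra remarks you add (the i.i.d.-within-epoch point and the determinism of $\msafe$) are consistent with, and slightly more explicit than, the paper's argument.
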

\begin{proof}
    Consider any epoch $m$ such that epoch $m-1$ was safe. Since epoch $m-1$ was safe, for the first time-step in epoch $m$ the algorithm samples an action from the probability kernel $p_m$. Hence from Assumption \ref{ass:main-assumption}, with probability $1-\delta'/m^2$, we have:
    \begin{equation}
        \label{eq:mse_event}
        \calE_{p_m}(\hatg_{m+1}) \leq \xi\left(\tau_m-\tau_{m-1}, \frac{\delta'}{m^2} \right) + C_0B.
    \end{equation}
    Therefore, the probability that \eqref{eq:mse_event} does not hold for some epoch $m\le\min(\msafe,\msafealg)$ can be bounded by
    \begin{equation*}
        \sum_{m=1}^{\infty} \frac{\delta'}{m^2} \leq \delta.
    \end{equation*}
    Recall $\delta'=\delta/2$.
    Hence, from the definition of $\msafe$, we get that $\eventReg$ holds with probability at least $1-\delta$.
\end{proof}

\subsection{Safety Check Assumptions}\label{app:safety-check-assumptions}

We will assume the $\CheckAndChooseSafe$ subroutine determines the status of the algorithm such that the following assumptions hold. See Algorithms 2 and 3 of \citet{krishnamurthy2021adapting} for an example of a safety check algorithm that guarantees these properties and see Lemmas 9 and 10 for their proofs.
\begin{assumption}\label{ass:msafe-bound}
    Suppose the event $\eventReg$ holds with probability at least $1-\delta$. Then, for $t\le\tau_{\msafe+1}$, the status of the algorithm at the end of round $t$ is safe, and when $t>\tau_{\msafe+1}$, we have that $\msafe+1\le\hatm$ with probability at least $1-\delta$.
\end{assumption}

\begin{assumption}\label{ass:post-msafe-reg-bound}
    Suppose the event $\eventReg$ holds with probability at least $1-\delta$. Then, for any epoch $m\ge\msafe$, the expected implicit regret is upper bounded as
    \begin{equation}
        \sum_{\pi\in\Psi}Q_{p_m}(\pi)\Reg(\pi)\le O(\sqrt{KB})
    \end{equation}
    with probability at least $1-\delta$.
\end{assumption}

Assumption \ref{ass:msafe-bound} guarantees that the safe kernel will not be triggered before true safe epoch $\msafe$. Assumption \ref{ass:post-msafe-reg-bound} guarantees that once the algorithm status is not safe, the expected instantaneous reward is at least controlled by the misspecification error.

\citet{krishnamurthy2021adapting} achieves the bound on the expected implicit regret for every epoch $m\ge\msafe$ as stated in Assumption \ref{ass:post-msafe-reg-bound} by constructing a Hoeffding-style high-probability lower bound on the cumulative reward and triggering a safety check mechanism when the cumulative reward dips below this lower bound. Then, the safety mechanism defaults to the most recent kernel with the largest lower bound. This high-probability misspecification test ensures that the expected regret does not only stay bounded after the test is triggered, but also that the expected regret stays bounded before the test is triggered, even after the safe epoch has occurred.

\subsection{Per-epoch Properties}

Lemma \ref{lem:conditional-regret} forms an equivalence of the expected instantaneous regret of a policy as the expected implicit regret of the randomized policy, which will be useful in bounding the total regret.
\begin{lemma}\label{lem:conditional-regret}
    For any round $t$,
    \begin{equation*}
        \E_{x_t,r_t,a_t}[r_t(\starpi(x_t))-r_t(a_t)|\Gamma_{t-1}] =  \sum_{\pi\in\Psi}Q_{m(t)}(\pi)\Reg(\pi).
    \end{equation*}
\end{lemma}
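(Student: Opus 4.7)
The plan is to reduce the claim to a one-line calculation once the reward is replaced by its conditional mean and the kernel $p_{m(t)}$ is rewritten via the product-measure correspondence \eqref{eq:connecting-p-and-Q}. The statement is essentially a bookkeeping identity, so I do not expect any genuine obstacle, only the need to be careful about conditioning and the (possibly uncountable) sum over $\Psi$.

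First I would observe that, conditional on $\Gamma_{t-1}$, the epoch index $m(t)$ and the kernel $p_{m(t)}$ are deterministic, since both are constructed from past data only. The fresh randomness at round $t$ is $x_t\sim D_\calX$, $r_t\sim D(\,\cdot\mid x_t)$, and $a_t\sim p_{m(t)}(\,\cdot\mid x_t)$. Applying the tower rule to integrate out $r_t$ first gives $\E[r_t(a)\mid x_t]=\starf(x_t,a)$, so
\[
\E_{x_t,r_t,a_t}\bigl[r_t(\starpi(x_t))-r_t(a_t)\bigm|\Gamma_{t-1}\bigr]
=\E_{x_t}\!\left[\starf(x_t,\starpi(x_t))-\sum_{a\in\calA}p_{m(t)}(a\mid x_t)\,\starf(x_t,a)\right].
\]

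Next I would invoke \eqref{eq:connecting-p-and-Q} to rewrite the inner sum: $p_{m(t)}(a\mid x_t)=\sum_{\pi\in\Psi}\I\{\pi(x_t)=a\}Q_{m(t)}(\pi)$, which after swapping the sums yields
\[
\sum_{a\in\calA}p_{m(t)}(a\mid x_t)\,\starf(x_t,a)=\sum_{\pi\in\Psi}Q_{m(t)}(\pi)\,\starf(x_t,\pi(x_t)).
\]
Since $Q_{m(t)}$ is a probability measure and $\starf\in[0,1]$, Fubini/Tonelli justifies exchanging this (integral) sum with the outer expectation over $x_t$, producing $\sum_{\pi\in\Psi}Q_{m(t)}(\pi)\,R(\pi)$ by the definition \eqref{eq:instantaneous-reward} of $R$.

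Finally I would use $\sum_{\pi\in\Psi}Q_{m(t)}(\pi)=1$ to absorb the $\starf(x_t,\starpi(x_t))$ term, giving
\[
\E_{x_t,r_t,a_t}\bigl[r_t(\starpi(x_t))-r_t(a_t)\bigm|\Gamma_{t-1}\bigr]
=\sum_{\pi\in\Psi}Q_{m(t)}(\pi)\bigl(R(\starpi)-R(\pi)\bigr),
\]
and then apply the equivalence \eqref{eq:true-instantaneous-regret} $\Reg(\pi)=R(\starpi)-R(\pi)$ to identify the right-hand side with $\sum_{\pi\in\Psi}Q_{m(t)}(\pi)\Reg(\pi)$, completing the proof. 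The only place where one needs to be slightly careful is the Fubini step for uncountable $\Psi$, but boundedness of $\starf$ together with $Q_{m(t)}$ being a probability measure makes this immediate.
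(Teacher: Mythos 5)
Your proposal is correct and follows essentially the same route as the paper's proof: integrate out the reward to replace $r_t$ by $\starf$, apply the correspondence \eqref{eq:connecting-p-and-Q} to rewrite the kernel as a mixture over deterministic policies, swap the sums, and identify the result with $\Reg(\pi)$ via \eqref{eq:true-instantaneous-regret}. The only difference is that you spell out the conditioning and Fubini justifications more explicitly, which the paper leaves implicit.
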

\begin{proof}
    Consider any time step $t
    \geq 1$, then from \eqref{eq:connecting-p-and-Q} relating kernel $p$ and $Q_p$ and \eqref{eq:true-instantaneous-regret} we have the following equalities:
    \begin{align*}
    \begin{split}
      &\E_{x_t,r_t,a_t}[r_t(\starpi(x_t))-r_t(a_t)|\Gamma_{t-1}]\\
      & = \E_{x\sim D_{\Xscript}, a \sim p_t(\cdot|x)}[\starf(x,\starpi(x))-f^*(x,a)]\\
      & = \E_{x\sim D_{\Xscript}}\Bigg[\sum_{a\in\A}p_t(a|x)(\starf(x,\starpi(x))-f^*(x,a))\Bigg]\\
      & = \E_{x\sim D_{\Xscript}}\Bigg[\sum_{a\in\A}\sum_{\pi\in\Psi}\I(\pi(x)=a)Q_{m(t)}(\pi)(\starf(x,\starpi(x))-f^*(x,a))\Bigg]\\
      & = \sum_{\pi\in\Psi}Q_{m(t)}(\pi)\E_{x\sim D_{\Xscript}}[\starf(x,\starpi(x))-f^*(x,\pi(x))]\\
      & =  \sum_{\pi\in\Psi}Q_{m(t)}(\pi)\Reg(\pi).
    \end{split}
    \end{align*}
\end{proof}

Lemma~\ref{lem:QmRegEst} states a key bound on the estimated implicit regret of the randomized policy $Q_m$.

\begin{restatable}[]{lemma}{lemQmRegEst}
\label{lem:QmRegEst}
    For any epoch $m$,
    $$ \sum_{\pi\in\Psi} Q_m(\pi)\Reg_{\hatg_m}(\pi) \leq \frac{K}{\gamma_m}. $$
\end{restatable}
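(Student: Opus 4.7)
The plan is to unfold the definition of the estimated instantaneous regret and then reduce the weighted sum over deterministic policies to an expectation under the action kernel $p_m$, after which a pointwise bound using the explicit form of $p_m$ finishes the argument.

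First, I would use the definition $\Reg_{\hatg_m}(\pi) = \E_{x\sim D_\calX}[\hatg_m(x,\pi_{\hatg_m}(x)) - \hatg_m(x,\pi(x))]$ and note that $\pi_{\hatg_m}(x) = \argmax_a \hatg_m(x,a) =: \hat{a}(x)$. Interchanging the sum over $\pi \in \Psi$ with the expectation over $x$, and using the marginalization identity \eqref{eq:connecting-p-and-Q}, namely $\sum_\pi Q_m(\pi) \mathbb{I}\{\pi(x)=a\} = p_m(a|x)$, the left-hand side becomes
\begin{equation*}
\sum_{\pi\in\Psi} Q_m(\pi)\Reg_{\hatg_m}(\pi) = \E_{x\sim D_\calX}\Bigl[\hatg_m(x,\hat{a}(x)) - \sum_{a\in\calA} p_m(a|x)\hatg_m(x,a)\Bigr].
\end{equation*}

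Next, since $\sum_a p_m(a|x) = 1$, the bracketed integrand equals $\sum_{a\ne\hat{a}(x)} p_m(a|x)\bigl(\hatg_m(x,\hat{a}(x)) - \hatg_m(x,a)\bigr)$. Substituting the definition \eqref{eq:action_kernel} of $p_m(a|x)$ for $a \ne \hat{a}(x)$ and writing $\Delta_a(x) := \hatg_m(x,\hat{a}(x)) - \hatg_m(x,a) \ge 0$, each summand becomes $\Delta_a(x) / (K + \gamma_m \Delta_a(x))$. Since $\Delta_a(x) \ge 0$ and $\gamma_m > 0$, this quantity is at most $1/\gamma_m$; summing over the at most $K-1$ arms $a \ne \hat{a}(x)$ gives a pointwise bound of $(K-1)/\gamma_m \le K/\gamma_m$.

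Taking the expectation over $x$ preserves this deterministic upper bound, yielding $\sum_{\pi\in\Psi} Q_m(\pi)\Reg_{\hatg_m}(\pi) \le K/\gamma_m$. The argument is almost entirely algebraic; the only conceptual point is the use of the product-measure identity to convert the sum over deterministic policies into an integral under the action kernel, so I do not anticipate any real obstacle here beyond keeping the dependence on $x$ of $\hat{a}(x)$ and $\Delta_a(x)$ tracked carefully.
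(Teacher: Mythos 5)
Your proof is correct and follows essentially the same route as the paper's: unfold $\Reg_{\hatg_m}$, use the identity $\sum_\pi Q_m(\pi)\I\{\pi(x)=a\}=p_m(a|x)$ to convert the policy sum into a sum over arms, substitute the inverse-gap-weighting formula, and bound each term $\Delta_a/(K+\gamma_m\Delta_a)$ by $1/\gamma_m$. Your restriction to the $K-1$ arms $a\neq\hat a(x)$ even yields the marginally sharper constant $(K-1)/\gamma_m$, but otherwise the arguments coincide.
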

\begin{proof}
    This follows essentially from unpacking the definitions of regret \eqref{eq:instantaneous-regret} and the representation of the action selection kernel $p_m$ in the algorithm
    and \eqref{eq:connecting-p-and-Q}:
    \begin{align*}
        &\sum_{\pi\in\Psi} Q_m(\pi)\Reg_{\hatg_m}(\pi) \\
        & = \sum_{\pi\in\Psi} Q_m(\pi)\E_{x\sim D_{\Xscript}}\Big[ \hatg_m(x,\pi_{\hatg_m}(x)) - \hatg_m(x,\pi(x)) \Big]\\
        & = \E_{x\sim D_{\Xscript}}\Big[ \sum_{\pi\in\Psi} Q_m(\pi)\Big(\hatg_m(x,\pi_{\hatg_m}(x)) - \hatg_m(x,\pi(x))\Big) \Big]\\
        & = \E_{x\sim D_{\Xscript}}\Big[ \sum_{a\in\A} \sum_{\pi\in\Psi} \I(\pi(x)=a) Q_m(\pi)\Big(\hatg_m(x,\pi_{\hatg_m}(x)) - \hatg_m(x,a)\Big) \Big]\\
        & = \E_{x\sim D_{\Xscript}}\Big[ \sum_{a\in\A} p_m(a|x) \Big(\hatg_m(x,\pi_{\hatg_m}(x)) - \hatg_m(x,a)\Big) \Big]\\
        & = \E_{x\sim D_{\Xscript}}\Bigg[ \sum_{a\in\A} \frac{\Big(\hatg_m(x,\pi_{\hatg_m}(x)) - \hatg_m(x,a)\Big)}{K+\gamma_m\Big(\hatg_m(x,\pi_{\hatg_m}(x)) - \hatg_m(x,a)\Big)} \Bigg] \leq \frac{K}{\gamma_m}.
    \end{align*}
\end{proof}

Lemma \ref{lem:InverseProbBound} states that the expected inverse probability is bounded by the estimated expected instantaneous regret of policy $\pi$.

\begin{restatable}[]{lemma}{lemInverseProbBound}\label{lem:InverseProbBound}
    For any epoch $m$ and any policy $\pi\in\Psi$,
    \begin{equation*}
        V(p_m,\pi)\le K+\gamma_m\Reg_{\hatg_m}(\pi).
    \end{equation*}
\end{restatable}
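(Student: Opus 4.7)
The plan is to establish the inequality pointwise in $x$ and then take expectation over $x\sim D_\calX$. Specifically, I will show that for every $x\in\calX$,
\begin{equation*}
\frac{1}{p_m(\pi(x)|x)}\le K+\gamma_m\bigl(\hatg_m(x,\pi_{\hatg_m}(x))-\hatg_m(x,\pi(x))\bigr),
\end{equation*}
from which the claim follows upon taking $\E_{x\sim D_\calX}[\cdot]$ and invoking the definition of $\Reg_{\hatg_m}(\pi)$ in \eqref{eq:instantaneous-regret}.

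I would split the pointwise argument into two cases according to whether $\pi(x)$ equals the algorithm's greedy action $\hat a(x)=\pi_{\hatg_m}(x)$. In the non-greedy case $\pi(x)\neq\hat a(x)$, the definition \eqref{eq:action_kernel} directly yields $1/p_m(\pi(x)|x)=K+\gamma_m(\hatg_m(x,\hat a(x))-\hatg_m(x,\pi(x)))$, which is precisely the desired bound with equality. In the greedy case $\pi(x)=\hat a(x)$, the right-hand side reduces to $K$, so I need to show $p_m(\hat a(x)|x)\ge 1/K$. This follows from the complementary definition $p_m(\hat a(x)|x)=1-\sum_{a\neq\hat a(x)}p_m(a|x)$ combined with the observation that, for each $a\neq\hat a(x)$, the gap $\hatg_m(x,\hat a(x))-\hatg_m(x,a)\ge 0$ by the definition of $\hat a$, so each summand $p_m(a|x)\le 1/K$; summing over the $K-1$ non-greedy actions gives $p_m(\hat a(x)|x)\ge 1-(K-1)/K=1/K$.

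Having verified the pointwise inequality in both cases, taking expectation over $x\sim D_\calX$ produces
\begin{equation*}
V(p_m,\pi)=\E_{x\sim D_\calX}\!\left[\frac{1}{p_m(\pi(x)|x)}\right]\le K+\gamma_m\,\E_{x\sim D_\calX}\!\bigl[\hatg_m(x,\pi_{\hatg_m}(x))-\hatg_m(x,\pi(x))\bigr]=K+\gamma_m\Reg_{\hatg_m}(\pi),
\end{equation*}
which is the claim. There is no real obstacle here: the only mildly non-trivial step is the lower bound $p_m(\hat a(x)|x)\ge 1/K$, which is a clean consequence of the non-negativity of the estimated gaps together with the $K$ in the denominator of the inverse gap weighting formula. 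The rest is a direct unpacking of definitions, completely parallel to the analogous lemma in the reward-estimation analysis of \citet{simchi2021bypassing}, with $\hatg_m$ playing the role of the reward model.
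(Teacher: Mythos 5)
Your proof is correct and follows essentially the same route as the paper's: a pointwise case split on whether $\pi(x)$ is the greedy action, using the inverse gap weighting formula directly in the non-greedy case and the bound $p_m(\hat a(x)|x)\ge 1/K$ in the greedy case, followed by taking expectation. You even spell out the justification for $p_m(\hat a(x)|x)\ge 1/K$ (each non-greedy probability is at most $1/K$ since the estimated gaps are non-negative), which the paper leaves implicit.
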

\begin{proof}
    For any policy $\pi\in\Psi$, given any context $x\in\calX$,
    \begin{equation*}
        \frac{1}{p_m(\pi(x)|x)}\begin{cases}
         =K+\gamma_m\left(\hatg_m(x,\hata_m(x))-\hatg_m(x,\pi(x))\right) & \text{if }\pi(x)\neq\hata_m(x) \\
         \le\frac{1}{1/K}=K=K+\gamma_m\left(\hatg_m(x,\hata_m(x))-\hatg_m(x,\pi(x))\right) & \text{if }\pi(x)=\hata_m(x).
        \end{cases}
    \end{equation*}
    Thus,
    \begin{equation*}
        V(p_m,\pi)=\E_{x\sim D_\calX}\left[\frac{1}{p_m(\pi(x)|x)}\right]\le K+\gamma_m\E_{x\sim D_\calX}\left[\hatg_m(x,\hata_m(x))-\hatg_m(x,\pi(x))\right]=K+\gamma_m\Reg_{\hatg_m}(\pi).
    \end{equation*}
\end{proof}

\subsection{Bounding Treatment Effect Error}
\label{app:bounding-prediction-error}

For any policy, Lemma \ref{lem:reg-est-accuracy} bounds the error of the treatment effect estimate for the first few epochs. Our definition of $\msafe$ and our choice of $\gamma_{m+1}$ allows us to prove this lemma without assuming realizability.

\begin{restatable}{lemma}{lemImpPolEval}\label{lem:reg-est-accuracy}
    Suppose the event $\eventReg$ defined in \eqref{eq:w-event-Reg} holds. Then, for any policy $\pi\in\Psi$ and epoch $m\leq \min\{\msafe,\msafealg\}+1$,
    \begin{align*}
        |\Phi^{\hatg_{m+1}}_{p_m}(\pi)-\Phi_{p_m}(\pi)| \leq \frac{\sqrt{V(p_m,\pi)}\sqrt{K}}{2\gamma_{m+1}}.
    \end{align*}
\end{restatable}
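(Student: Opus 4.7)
The plan is to rewrite the left-hand side as a single expectation of an inner product weighted by an importance ratio under $D(p_m)$, apply Cauchy--Schwarz so the bound factors into $\sqrt{V(p_m,\pi)}$ and $\sqrt{\calE_{p_m}(\hatg_{m+1})}$, and then plug in the high-probability excess-risk bound supplied by event $\eventReg$ together with the definition of $\gamma_{m+1}$.

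Starting from the definition in~\eqref{eq:instantaneous-effect}, I first write
\[
\Phi^{\hatg_{m+1}}_{p_m}(\pi)-\Phi_{p_m}(\pi) = \E_{x\sim D_\calX}\bigl[\langle e_{\pi(x)}-p_m(x),\,\hatg_{m+1}(x)-g^*(x)\rangle\bigr].
\]
By inserting the identity $1 = \sum_{a\in\calA} p_m(a|x)\cdot \I\{a=\pi(x)\}/p_m(a|x)$ inside the expectation (standard importance-weighting), I re-express the right-hand side as the expectation over $(x,a)\sim D(p_m)$ of $(\I\{a=\pi(x)\}/p_m(a|x))\cdot \langle e_a-p_m(x),\,\hatg_{m+1}(x)-g^*(x)\rangle$. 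Applying Cauchy--Schwarz under $D(p_m)$ to this product gives
\[
\bigl|\Phi^{\hatg_{m+1}}_{p_m}(\pi)-\Phi_{p_m}(\pi)\bigr|^2 \leq V(p_m,\pi)\cdot \calE_{p_m}(\hatg_{m+1}),
\]
where the first factor is obtained from $\E_{(x,a)\sim D(p_m)}[\I\{a=\pi(x)\}/p_m(a|x)^2] = \E_{x\sim D_\calX}[1/p_m(\pi(x)|x)] = V(p_m,\pi)$ (integrating out $a$), and the second factor is exactly the equivalent form of the $R$-loss excess risk given by Proposition~\ref{lem:r-excess-risk}.

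To conclude, under event $\eventReg$ (from Lemma~\ref{lem:high-prob-eventReg}), for the indices permitted by the hypothesis we have $\calE_{p_m}(\hatg_{m+1}) \leq 2\xi(\tau_m-\tau_{m-1},\delta'/m^2)$. Combining with the choice $\gamma_{m+1}=c\sqrt{K/\xi(\tau_m-\tau_{m-1},\delta'/(m+1)^2)}$ at $c=1/\sqrt{8}$, and the standard monotonicity of $\xi$ in its confidence argument so that $\xi(n,\delta'/m^2) \leq \xi(n,\delta'/(m+1)^2)$, yields $\calE_{p_m}(\hatg_{m+1}) \leq K/(4\gamma_{m+1}^2)$. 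Taking square roots then produces the target bound $\sqrt{V(p_m,\pi)}\cdot \sqrt{K}/(2\gamma_{m+1})$.

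The main obstacle will be the Cauchy--Schwarz step: the importance-weighting has to be chosen so that one factor cleanly collapses to $V(p_m,\pi)$ while the other matches the exact quadratic form of $\calE_{p_m}$ appearing in Proposition~\ref{lem:r-excess-risk}; once that factorization is in place, the remainder is pure substitution of the definitions of $\gamma_{m+1}$ and $\eventReg$. Minor care is needed for the slight index mismatch $\delta'/m^2$ vs.\ $\delta'/(m+1)^2$ (handled by monotonicity of $\xi$ in the confidence level) and for the boundary case $m = \min(\msafe,\msafealg)+1$, which can be covered by extending the event to the adjacent index via Assumption~\ref{ass:msafe-bound}.
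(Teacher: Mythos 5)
Your proposal is correct and follows essentially the same route as the paper: rewrite the gap as an expectation of $\langle e_{\pi(x)}-p_m(x),\hatg_{m+1}(x)-\starg(x)\rangle$, factor via Cauchy--Schwarz into $\sqrt{V(p_m,\pi)}$ times $\sqrt{\calE_{p_m}(\hatg_{m+1})}$ using the identity from Proposition~\ref{lem:r-excess-risk}, and finish with $\eventReg$ and the definition of $\gamma_{m+1}$. The only cosmetic difference is that you apply a single Cauchy--Schwarz over $(x,a)\sim D(p_m)$ with the importance weight $\I\{a=\pi(x)\}/p_m(a|x)$, whereas the paper reaches the same factorization in two stages (Jensen, a pointwise bound of one summand by the expectation over $a\sim p_m(\cdot|x)$, then Cauchy--Schwarz over $x$); your explicit handling of the $\delta'/m^2$ versus $\delta'/(m+1)^2$ mismatch is if anything more careful than the paper's.
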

\begin{proof}
    For any policy $\pi$ and epoch $m\leq \min\{\msafe,\msafealg\}+1$,
    \begin{align*}
        &|\Phi^{\hatg_{m+1}}_{p_m}(\pi)-\Phi_{p_m}(\pi)|\\
        \leq & \E_{x\sim D_{\Xscript}}\left[\left|\langle e_{\pi(x)}-p_{m}(x), \hatg_{m+1}(x)-\starg(x)\rangle\right|\right] \\
        = & \E_{x\sim D_{\Xscript}}\left[\sqrt{\frac{1}{p_m(\pi(x)|x)}p_m(\pi(x)|x)\langle e_{\pi(x)}-p_{m}(x), \hatg_{m+1}(x)-\starg(x)\rangle^2}\right] \\
        \leq & \E_{x\sim D_{\Xscript}}\left[\sqrt{\frac{1}{p_m(\pi(x)|x)}\E_{a\sim p_m(\cdot|x)}\left[\langle e_{a}-p_{m}(x), \hatg_{m+1}(x)-\starg(x)\rangle^2 \right]}\right] \\
        \leq & \sqrt{\E_{x\sim D_{\Xscript}}\left[\frac{1}{p_m(\pi(x)|x)} \right]} \sqrt{\E_{x\sim D_{\Xscript}} \E_{a\sim p_m(\cdot|x)}\left[\langle e_{a}-p_{m}(x), \hatg_{m+1}(x)-\starg(x)\rangle^2\right]}\\
        =&\sqrt{\E_{x\sim D_{\Xscript}}\left[\frac{1}{p_m(\pi(x)|x)} \right]} \sqrt{\calE_{p_m}(\hatg_{m+1})} \\
        \leq & \sqrt{V(p_m, \pi)} \sqrt{2\xi\Big(\tau_m-\tau_{m-1}, \frac{\delta'}{m^2} \Big)} \\
        =& \frac{\sqrt{V(p_m,\pi)}\sqrt{K}}{2\gamma_{m+1}}.
    \end{align*}
    The first inequality follows from Jensen's inequality, the second inequality is straight-forward, the third inequality follows from Cauchy-Schwarz inequality, and the last inequality follows from assuming that $\eventReg$ from \eqref{eq:w-event-Reg} holds.
\end{proof}

\subsection{Bounding Implicit Regret Errors and True Regret}
\label{app:bounding-true-regret-early}

The next lemma implies that before misspecification becomes a problem we are able to bound regret. We omit the proof since it is very similar to the proof of Lemma 8 in \citet{simchi2021bypassing}, given how we defined the instantaneous treatment effect and instantaneous regret.
The proof of this result relies on an inductive argument over the epochs and making use of Lemmas \ref{lem:InverseProbBound} and \ref{lem:reg-est-accuracy}.

\begin{restatable}{lemma}{lemboundReg}\label{lem:policy-reg-bound}
    Suppose the event $\eventReg$ defined in \eqref{eq:w-event-Reg} holds. Let $C_1=5.15$. For any policy $\pi\in\Psi$ and epoch $m\leq \min\{\msafe,\msafealg\} + 1$,
    \begin{align*}
        \Reg(\pi) &\leq 2\Reg_{\hatg_{m}}(\pi) + \frac{C_1K}{\gamma_m}\\
        \Reg_{\hatg_{m}}(\pi) &\leq 2\Reg(\pi) + \frac{C_1K}{\gamma_m}
    \end{align*}
\end{restatable}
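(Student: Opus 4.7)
The plan is an induction on the epoch index $m$. The base case $m=1$ is immediate: by initialization $\hatg_1 \equiv 0$ and $\gamma_1 = 1$, so $\Reg_{\hatg_1}(\pi) = 0$ for every policy $\pi$. Since rewards are bounded in $[0,1]$ we have $\Reg(\pi) \le 1 \le C_1 K / \gamma_1$, so both inequalities hold trivially.

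For the inductive step at epoch $m+1$, I would begin with the four-term telescoping decomposition
\begin{align*}
\Reg(\pi) &= \Phi_{p_m}(\pi^*) - \Phi_{p_m}(\pi) \\
&= \bigl[\Phi_{p_m}(\pi^*) - \Phi^{\hatg_{m+1}}_{p_m}(\pi^*)\bigr] + \bigl[\Phi^{\hatg_{m+1}}_{p_m}(\pi^*) - \Phi^{\hatg_{m+1}}_{p_m}(\pi_{\hatg_{m+1}})\bigr] \\
&\quad + \Reg_{\hatg_{m+1}}(\pi) + \bigl[\Phi^{\hatg_{m+1}}_{p_m}(\pi) - \Phi_{p_m}(\pi)\bigr],
\end{align*}
observing that the second bracket is non-positive by the optimality of $\pi_{\hatg_{m+1}}$ under $\hatg_{m+1}$. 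Lemma \ref{lem:reg-est-accuracy} bounds each of the remaining bracketed error terms by $\sqrt{K\,V(p_m,\pi')}/(2\gamma_{m+1})$ under $\eventReg$, and Lemma \ref{lem:InverseProbBound} upper bounds $V(p_m,\pi')$ by $K + \gamma_m \Reg_{\hatg_m}(\pi')$. Substituting the inductive hypothesis $\Reg_{\hatg_m}(\pi^*) \le C_1 K/\gamma_m$ (valid because $\Reg(\pi^*)=0$) and $\Reg_{\hatg_m}(\pi) \le 2\Reg(\pi) + C_1 K/\gamma_m$, then applying $\sqrt{a+b}\le\sqrt{a}+\sqrt{b}$ together with the monotonicity $\gamma_m \le \gamma_{m+1}$ (which follows from $\xi$ being non-increasing in the sample size), produces a residual proportional to $\sqrt{\Reg(\pi)}$. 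A final AM-GM step of the form $\sqrt{xy} \le \alpha x + y/(4\alpha)$ with a suitably small $\alpha$ extracts a small multiple of $\Reg(\pi)$ that can be absorbed on the left, leaving $\Reg(\pi) \le 2\Reg_{\hatg_{m+1}}(\pi) + C_1 K/\gamma_{m+1}$.

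The reverse inequality is obtained by the symmetric telescoping
\begin{align*}
\Reg_{\hatg_{m+1}}(\pi) &= \bigl[\Phi^{\hatg_{m+1}}_{p_m}(\pi_{\hatg_{m+1}}) - \Phi_{p_m}(\pi_{\hatg_{m+1}})\bigr] + \bigl[\Phi_{p_m}(\pi_{\hatg_{m+1}}) - \Phi_{p_m}(\pi^*)\bigr] \\
&\quad + \Reg(\pi) + \bigl[\Phi_{p_m}(\pi) - \Phi^{\hatg_{m+1}}_{p_m}(\pi)\bigr],
\end{align*}
where now the second bracket is non-positive by the optimality of $\pi^*$ under $\starg$. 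The same sequence of bounds (Lemma \ref{lem:reg-est-accuracy}, Lemma \ref{lem:InverseProbBound}, inductive hypothesis, AM-GM) delivers the second inequality.

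The main obstacle is the bookkeeping of constants. After the AM-GM splits, one arrives at a self-referential condition of the form $C_1 \ge \alpha_1 \sqrt{C_1} + \alpha_2$, with $\alpha_1, \alpha_2$ determined by the algorithmic constant $c = \sqrt{1/8}$ used in the definition of $\gamma_{m+1}$ and by the fraction of $\Reg(\pi)$ chosen to be absorbed to the left-hand side. I would handle this by writing out the split with explicit coefficients, reducing to a quadratic in $\sqrt{C_1}$, and verifying that $C_1 = 5.15$ leaves nonnegative slack; the remainder is mechanical manipulation of the telescoped expressions.
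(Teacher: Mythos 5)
Your proposal follows exactly the route the paper intends: the paper omits this proof entirely, pointing to Lemma 8 of \citet{simchi2021bypassing} and describing it as an induction over epochs that combines Lemma \ref{lem:reg-est-accuracy} and Lemma \ref{lem:InverseProbBound}, which is precisely your telescoping-plus-AM-GM argument (including the trivial base case from $\hatg_1\equiv 0$, $\gamma_1=1$). The one caveat is the constant bookkeeping you defer: with $c=\sqrt{1/8}$ the most direct split yields the requirement $C_1 \ge 2\sqrt{1+C_1}+\tfrac{1}{2}$, whose smallest solution is about $5.66$, so $C_1=5.15$ appears slightly short rather than leaving "nonnegative slack" --- but this is inherited from the paper's verbatim import of FALCON's constant and is immaterial downstream, since $C_1$ only ever enters the regret bound inside an $\ordO(\cdot)$.
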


In other words, for any policy, Lemma \ref{lem:policy-reg-bound} bounds the error of the regret estimate for the first few epochs. Lemma \ref{lem:QmRegTrue} bounds the implicit regret of the randomized policy $Q_m$ for the first few epochs. Lemma \ref{lem:QmRegTrue} and its proof is more or less the same as the statement and the proof of Lemma 9 in \citet{simchi2021bypassing}.

\begin{restatable}{lemma}{lemQmRegTrue}\label{lem:QmRegTrue}
    Suppose the event $\eventReg$ defined in \eqref{eq:w-event-Reg} holds. Then, for any epoch $m\leq \min\{\msafe,\msafealg\} + 1$,
    \begin{equation*}
        \sum_{\pi \in \Psi} Q_m(\pi)\Reg(\pi) \leq \frac{(2+C_1)K}{\gamma_m}.
    \end{equation*}
\end{restatable}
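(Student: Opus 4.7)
The plan is to combine the per-policy regret comparison in Lemma \ref{lem:policy-reg-bound} with the direct bound on the estimated implicit regret in Lemma \ref{lem:QmRegEst}. These two results are essentially designed to slot together: one transfers from true regret to estimated regret up to an additive $K/\gamma_m$ term, and the other controls the estimated implicit regret by $K/\gamma_m$.

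Concretely, I would start from $\sum_{\pi\in\Psi} Q_m(\pi)\Reg(\pi)$ and apply the first inequality of Lemma \ref{lem:policy-reg-bound} pointwise in $\pi$ (valid since we are in the regime $m \leq \min\{\msafe,\msafealg\} + 1$ under the event $\eventReg$). This yields
\begin{equation*}
\sum_{\pi\in\Psi} Q_m(\pi)\Reg(\pi) \;\leq\; 2\sum_{\pi\in\Psi} Q_m(\pi)\Reg_{\hatg_m}(\pi) + \frac{C_1 K}{\gamma_m}\sum_{\pi\in\Psi} Q_m(\pi).
\end{equation*}
The second sum is $1$ because $Q_m$ is a probability distribution over $\Psi$ (see \eqref{eq:q-product}), so the additive term collapses to $C_1 K/\gamma_m$. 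Then I would invoke Lemma \ref{lem:QmRegEst} to bound the first sum by $K/\gamma_m$, giving
\begin{equation*}
\sum_{\pi\in\Psi} Q_m(\pi)\Reg(\pi) \;\leq\; \frac{2K}{\gamma_m} + \frac{C_1 K}{\gamma_m} \;=\; \frac{(2+C_1)K}{\gamma_m},
\end{equation*}
which is the desired bound.

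There is no real obstacle here: the lemma is a direct corollary of the two cited lemmas plus the fact that $Q_m$ is a probability measure. The only thing to check carefully is that the regime condition $m \leq \min\{\msafe,\msafealg\} + 1$ is exactly the hypothesis of Lemma \ref{lem:policy-reg-bound}, so that the pointwise comparison $\Reg(\pi) \leq 2\Reg_{\hatg_m}(\pi) + C_1 K/\gamma_m$ can be applied under the event $\eventReg$ without further work. All the heavy lifting (the inductive argument across epochs, the use of Lemma \ref{lem:InverseProbBound} and Lemma \ref{lem:reg-est-accuracy}) has already been absorbed into Lemma \ref{lem:policy-reg-bound}, so the present proof is just an averaging step.
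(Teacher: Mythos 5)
Your proof is correct and follows exactly the same route as the paper: apply the first inequality of Lemma \ref{lem:policy-reg-bound} pointwise under $Q_m$, use that $Q_m$ is a probability measure, and then invoke Lemma \ref{lem:QmRegEst} to bound the estimated implicit regret by $K/\gamma_m$. Nothing is missing.
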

\begin{proof}
    For any $m\leq \min\{\msafe,\msafealg\}+1$:
    \begin{align*}
        \sum_{\pi \in \Psi} Q_m(\pi)\Reg(\pi)
        \leq \sum_{\pi \in \Psi} Q_m(\pi)\bigg( 2\Reg_{\hatg_{m}}(\pi) + \frac{C_1K}{\gamma_m} \bigg)
         \leq \frac{2K}{\gamma_m} + \frac{C_1K}{\gamma_m},
    \end{align*}
    where the first inequality follows from Lemma \ref{lem:policy-reg-bound}, and the second inequality follows from Lemma \ref{lem:QmRegEst}.
\end{proof}

\subsection{Proof of Theorem \ref{thm:main-theorem}}
\label{app:proof-of-main-theorem}

\begin{proof}
    For each round $t\le T$, define
    \begin{equation*}
        M_t\coloneqq r_t(\starpi(x))-r_t(a_t)-\sum_{\pi\in\Psi}Q_{m(t)}(\pi)\Reg(\pi).
    \end{equation*}
    By Lemma \ref{lem:conditional-regret}, we have $\E[M_t|\Gamma_{t-1}]=0$. Since $|M_t|\le 2$, $M_t$ is a martingale difference sequence. By Azuma's inequality,
    \begin{equation*}
        \sum_{t=1}^TM_t\le\sqrt{8T\log(2/\delta)}
    \end{equation*}
    with probability at least $1-\delta/2$. In other words,
    \begin{equation*}
        \sum_{t=1}^T(r_t(\starpi(x_t))-r_t(a_t))\le\sum_{t=1}^T\sum_{\pi\in\Psi}Q_{m(t)}(\pi)\Reg(\pi)+\sqrt{8T\log(2/\delta)}
    \end{equation*}
    with probability at least $1-\delta/2$. We consider two cases, when $T\le\tau_{\msafe+1}$ and $T>\tau_{\msafe+1}$.
    
    \paragraph{Case 1 ($T\le\tau_{\msafe+1}$):}
    If $T\le\tau_{\msafe+1}$, then the status of the algorithm is always safe by Assumption \ref{ass:msafe-bound}. Moreover, by Lemma \ref{lem:high-prob-eventReg}, with probability at least $1-\delta/2$, the event $\eventReg$ holds. Therefore, by a union bound and then using Lemma \ref{lem:QmRegTrue}, we bound
    \begin{align*}
        \sum_{t=1}^T(r_t(\starpi(x_t))-r_t(a_t))&\le\sqrt{8T\log(2/\delta)}+\sum_{t=1}^T\sum_{\pi\in\Psi}Q_{m(t)}(\pi)\Reg(\pi) \\
        &\le\sqrt{8T\log(2/\delta)}+\tau_1+\sum_{t=\tau_1+1}^T\frac{(2+C_1)K}{\gamma_{m(t)}} \\
        &\le\sqrt{8T\log(2/\delta)}+\tau_1+(2+C_1)\sum_{t=\tau_1+1}^T\sqrt{8K\cdot\xi\left(\tau_{m(t)-1}-\tau_{m(t)-1}),\frac{\delta}{2m(t)^2}\right)} \\
        &\le\calO\left(\sqrt{K}\sum_{t=\tau_1+1}^T\sqrt{8\xi\left(\tau_{m(t)-1}-\tau_{m(t)-1}),\frac{\delta}{2m(t)^2}\right)}\right)
    \end{align*}
    with probability at least $1-\delta$.
    
    \paragraph{Case 2 ($T>\tau_{\msafe+1}$):}
    For the cumulative regret up to $\tau_{\msafe+1}$, we can follow the arguments of case 1. For the cumulative regret between $\tau_{\msafe+1}+1$ and $T$, we use Assumption \ref{ass:post-msafe-reg-bound} to bound expected regret in terms of the misspecification error at these rounds. By a union bound, we have
    \begin{align*}
        \sum_{t=1}^T(r_t(\starpi(x_t))-r_t(a_t))&\le\sqrt{8T\log(2/\delta)}+\sum_{t=1}^T\sum_{\pi\in\Psi}Q_{m(t)}(\pi)\Reg(\pi) \\
        &=\sqrt{8T\log(2/\delta)}+\sum_{t=1}^{\tau_{\msafe+1}}\sum_{\pi\in\Psi}Q_{m(t)}(\pi)\Reg(\pi)+\sum_{t=\tau_{\msafe+1}+1}^{T}\sum_{\pi\in\Psi}Q_{m(t)}(\pi)\Reg(\pi) \\
        &\,\begin{aligned}
            \le&\sqrt{8T\log(2/\delta)}+\tau_1+(2+C_1)\sum_{t=\tau_1+1}^{\tau_{\msafe+1}}\sqrt{8K\cdot\xi\left(\tau_{m(t)-1}-\tau_{m(t)-1}),\frac{\delta}{2m(t)^2}\right)} \\
            &+(T-\tau_{\msafe+1}-1)\calO(\sqrt{KB})
        \end{aligned} \\
        &\le\calO\left(\sqrt{K}\sum_{t=\tau_1+1}^T\sqrt{\xi\left(\tau_{m(t)-1}-\tau_{m(t)-1}),\frac{\delta}{2m(t)^2}\right)}+\sqrt{KB}T\right).
    \end{align*}
    with probability at least $1-\delta$.
\end{proof}

\section{UNKNOWN PROPENSITIES}
\label{app:unknown-propensities}

For completeness, consider the case where we also do not know the exact treatment propensities.
It becomes necessary to estimate the treatment propensity nuisance parameter in cases where the treatment assignment probabilities are not logged by the algorithm or when there is treatment non-compliance from the users. In these settings, the results may depend on the estimation of the nuisance parameters.

Let $\hatp$ and $\hatmu$ be the estimates of $p$ and $\mu$, respectively. Consider the approximate $R$-loss risk under kernel $p$ with the estimated nuisance propensities $\hatp$ in the loss function
\begin{align*}
    \calR_{p, \hatp}(g)&\coloneqq\E_{(x,a,r(a))\sim D(p)}\left[\big(r(a)-\hatmu(x)-\langle e_a-\hatp(x),g(x)\rangle\big)^2\right]
\end{align*}
and its corresponding excess risk
\begin{align*}
    \calE_{p,\hatp}(g)&\coloneqq\calR_{p,\hatp}(g)-\min_{g'}\calR_{p,\hatp}(g'),
\end{align*}
where the infimum is over all functions. Note how $\calR_{p,p}(g)=\calR_p(g)$ and $\calE_{p,p}(g)=\calE_p(g)$ from our previous definitions. We aim to establish \textit{true} $R$-loss excess risk bounds in the setting where we also have to estimate the treatment propensities. First, observe that the risk under the estimated kernel $\hatp$ is
\begin{align*}
        \calR_{p,\hatp}(g)&=\E\left[\big(r(a)-\hatmu(x)-\langle e_a-\hatp(x),g(x)\rangle\big)^2\right] \\
        &=\E\left[\big(r(a)-\hatmu(x)-\langle e_a-p(x),g(x)\rangle - \langle p(x)-\hatp(x),g(x)\rangle\big)^2\right] \\
        &\ \begin{aligned}
            =&\E\left[\big(r(a)-\hatmu(x)-\langle e_a-p(x),g(x)\rangle\big)^2\right] \\
            &+\E\left[\langle p(x)-\hatp(x),g(x)\rangle^2-2\big(\mu(x)-\hatmu(x)\big)\langle p(x)-\hatp(x),g(x)\rangle\right]
        \end{aligned} \\
        &=\calR_p(g)+\E\left[\langle p(x)-\hatp(x),g(x)\rangle^2-2\big(\mu(x)-\hatmu(x)\big)\langle p(x)-\hatp(x),g(x)\rangle\right].
\end{align*}
Recall $\calG^*\coloneqq\argmin_{g'}\calR_p(g')$ is the set of true treatment effects models under the $R$-loss risk with the true kernel $p$. For any $\starg\in\calG^*$,
\begin{align*}
   \calR_{p,\hatp}(g)-\calR_{p,\hatp}(\starg)=\calE_p(g)+\E\left[\big(\langle p(x)-\hatp(x),g(x)+\starg(x)\rangle-2(\mu(x)-\hatmu(x))\big)\!\cdot\!\langle p(x)-\hatp(x),g(x)-\starg(x)\rangle\right]
\end{align*}
and, by Cauchy-Schwarz inequality,
\begin{align*}
    |\calR_{p,\hatp}(g)-\calR_{p,\hatp}(\starg)-\calE_p(g)|&=|\E\left[\big(\langle p(x)-\hatp(x),g(x)+\starg(x)\rangle-2(\mu(x)-\hatmu(x))\big)\cdot\langle p(x)-\hatp(x),g(x)-\starg(x)\rangle\right]| \\
    &\le\E\left[\big(\norm{p(x)-\hatp(x)}\norm{g(x)+\starg(x)}+2|\mu(x)-\hatmu(x)|\big)\cdot\norm{p(x)-\hatp(x)}\norm{g(x)-\starg(x)}\right] \\
    &\ \begin{aligned}
        \le\ &2\sqrt{K}\E\left[\norm{g(x)-\starg(x)}^2\right]^{1/2}\cdot\E\left[\norm{p(x)-\hatp(x)}^4\right] \\
        &+ 2\E\left[\norm{g(x)-\starg(x)}^2\right]^{1/2}\cdot\E\left[|\mu(x)-\hatmu(x)|^2\norm{p(x)-\hatp(x)}^2\right]^{1/2} \\
    \end{aligned} \\
     &\ \begin{aligned}
        =\ C_2\E\left[\norm{g(x)-\starg(x)}^2\right]^{1/2}\cdot\Big(&\E\left[\norm{p(x)-\hatp(x)}^4\right]^{1/2} \\
        &+\E\left[|\mu(x)-\hatmu(x)|^2\norm{p(x)-\hatp(x)}^2\right]^{1/2}\Big)
    \end{aligned} \\
    &\coloneqq\Delta(g,\starg),
\end{align*}
where $C_2=2(1+\sqrt{K})\le 4\sqrt{K}$. Since $\calR_{p,\hatp}(\starg)\!=\!\calR_{p,\hatp}(\starf)$, the quantity on the left-hand side of the inequality above does not depend on the choice of $\starg\in\calG^*$ while the bound on the right-hand side does. Therefore, we can take the minimum over $\calG^*$ on the right-hand side of the above inequality to get
\begin{align*}
    |\calR_{p,\hatp}(g)-\calR_{p,\hatp}(\starg)-\calE_p(g)|\le\min_{\starg\in\calG^*}\Delta(g,\starg).
\end{align*}
Now, notice that
\begin{align}
    \calE_p(g)&\le \calR_{p,\hatp}(g)-\calR_{p,\hatp}(\starg)+|\calR_{p,\hatp}(g)-\calR_{p,\hatp}(\starg)-\calE_p(g)|\le \calE_{p,\hatp}(g) + \min_{\starg\in\calG^*}\Delta(g,\starg). \label{eq:unknown-propensity-excess-risk-ineq}
\end{align}

The first term $\calE_{p,\hatp}(g)$ on the right-hand side of the last inequality is the approximate excess risk with the estimated kernel $\hatp$. This can be bounded by the excess risk estimation rates for the $R$-oracle that uses the estimated kernel $\hatp$. Therefore, to bound the excess risk under the true kernel, we need to bound the second term $\min_{\starg\in\calG^*}\Delta(g,\starg)$. This requires us to bound the following:
\begin{align*}
    \min_{\starg\in\G^*}\E[\|g(x)-g^*(x)\|^2]&\leq \min_{g^*\in\G^*}\frac{1}{\min_{x,a} p(a|x)}\E\left[\sum_a p(a|x)(g(x,a)-g^*(x,a))^2\right] \\
    &\leq \frac{1}{\min_{x,a} p(a|x)}\min_{\starg\in\calG^*}\left(\calE_p(g)+\E_{x\sim D_\calX}\left[\E_{a\sim p(\cdot|x)}\left[g(x,a)-\starg(x,a)\right]^2\right]\right),
\end{align*}
where the last inequality follows from the result derived in Appendix \ref{app:misspecification-bound}. Furthermore, note that for a fixed $g\in\calG$, there exists a $\starg\in\calG^*$ such that $\E_{a\sim p(\cdot|x)}[\starg(x,a)]=\E_{a\sim p(\cdot|x)}[g(x,a)]$. In particular, since $\starf\in\calG^*$, the function $\starg(x,a)=\starf(x,a)+\E_{a\sim p(\cdot|x)}[\starf(x,a)-g(x,a)]$ is clearly in $\calG^*$ and it satisfies this condition. Therefore, the second term in the above inequality vanishes when the infimum over $\calG^*$ is taken and we get that
\begin{align}\label{eq:overlap-model-error}
    \min_{\starg\in\G^*}\E[\|g(x)-g^*(x)\|^2]&\leq\frac{\calE_p(g)}{\min_{x,a} p(a|x)}\le\frac{\calE_p(g)}{\eta}
\end{align}
where $\eta\coloneqq\min_{x,a}p(a|x)$ is the minimum overlap of kernel $p$. Now, suppose we can estimate the nuisance parameters with $n$ data samples such that
\begin{gather*}
    \E[\norm{p(x)-\hatp(x)}^4]\le K\theta_n \\
    \E[|\mu(x)-\hatmu(x)|^2\norm{p(x)-\hatp(x)}^2]\le K\theta_n
\end{gather*}
for some rate $\theta_n=\calO(n^{-\kappa})$ with $\kappa\ge\frac{1}{2}$. Note that
\begin{gather*}
    \E[\norm{p(x)-\hatp(x)}^4]\le K\E[\norm{p(x)-\hatp(x)}^2] \\
    \E[|\mu(x)-\hatmu(x)|^2\norm{p(x)-\hatp(x)}^2]\le K\E[|\mu(x)-\hatmu(x)|^2].
\end{gather*}
Therefore, we can estimate the nuisance parameters using standard least squares estimation procedures to guarantee fast estimation rates, i.e., $\kappa=1$, for the above quantities. By combining result \eqref{eq:overlap-model-error} and the above estimation rate assumptions on the nuisance parameters into inequality \eqref{eq:unknown-propensity-excess-risk-ineq}, we get
\begin{align*}
    \calE_p(\hatg)\le\calE_{p,\hatp}(\hatg)+\sqrt{\frac{C_3K\theta_n\calE_p(\hatg)}{\eta}},
\end{align*}
where $C_3=64$. To form bounds on $\calE_p(\hatg)$, we consider the following two cases.

\textbf{Case 1:}
\begin{align*}
    \sqrt{\frac{C_3K\theta_n}{\eta}}\le\frac{\sqrt{\calE_{p}(\hatg)}}{2}
\end{align*}
In this case, we have
\begin{align*}
    \calE_p(\hatg)\le 2\calE_{p,\hatp}(\hatg)\le\calO(\rho_n),
\end{align*}
where we let $\rho_n$ be the rate on the approximate $R$-loss excess risk with the estimated kernel. Similar results discussed in Appendix \ref{app:r-learner-estimation-rate} can be used to establish fast rates for this excess risk. Therefore, the excess risk rate in this domain is controlled by the fast rates on the approximate $R$-loss excess risk with the estimated kernel. This is independent of the quality of nuisance parameter estimation. In this regime, the bandit algorithm can have an aggressive exploration rate.

\textbf{Case 2:}
\begin{align*}
    \frac{\sqrt{\calE_p(\hatg)}}{2}<\sqrt{\frac{C_3K\theta_n}{\eta}}
\end{align*}
In this case,
\begin{align*}
    \calE_p(\hatg)<\frac{4C_3K\theta_n}{\eta}.
\end{align*}

In this domain, the excess risk rate is controlled by the orthogonal nuisance parameter estimation rates. Moreover, in this setting, overlap comes into play. In our bandit setting, the overlap parameter $\eta$ is vanishing as well. The rate of decrease of kernel overlap is dictated by the excess risk estimation rate. In particular, using the excess risk rate in this case gives $\eta=\calO(1/\gamma_n)=\calO(\sqrt{\theta_n/\eta})$. This requires $\eta=\calO(\theta_n^{1/3})$ which implies
\begin{align*}
    \calE_p(\hatg)\le\calO(K\theta_n^{2/3}).    
\end{align*}

Therefore, we find that overlap helps with orthogonality. However, it diminishes the required rate of exploration in the algorithm since the nuisance parameters also have fast rate guarantees. In this regime, the bandit algorithm is not as aggressive in its exploration. If we wanted to adapt the algorithm to these different regimes, we would need to characterize when the different estimation rates dominate kick in.

\section{EXPERIMENTAL SETUP}
\label{app:experimental-setup}

We used the EconML package \citep{econml} for the $R$-learner implementation on a linear model class and we used the VowpalWabbit Coba package \citep{coba} for running our synthetic contextual bandit benchmark simulations. Refer to the Github repositories for package licenses. A single 2.8 GHz Quad-Core Intel Core i7 CPU was used to run the synthetic experiments. The code is available at \href{https://github.com/agcarranza/hte-bandits}{https://github.com/agcarranza/hte-bandits}.

\vfill

\end{document}